\theoremstyle{plain}
\newtheorem{theorem}{Theorem}[section]
\newtheorem{proposition}[theorem]{Proposition}
\theoremstyle{definition}
\newtheorem{definition}[theorem]{Definition}
\theoremstyle{remark}
\newtheorem{remark}[theorem]{Remark}
\DeclarePairedDelimiter{\norm}{\lVert}{\rVert}
\newcounter{sibcmntcounter}
\long\def\symbolfootnote[#1]#2{\begingroup
  \def\thefootnote{\fnsymbol{footnote}}\footnote[#1]{#2}\endgroup}
\newcommand{\sibcmnt}[1]{{\small\textbf{
      \textcolor{violet}{(C.\arabic{sibcmntcounter})}}
    \let\thefootnote\relax\footnotetext{\textcolor{violet}
        {\scriptsize(C.\arabic{sibcmntcounter})~ #1}}}
  \addtocounter{sibcmntcounter}{1}}
\icmltitlerunning{Differentially Private Distributed Bayesian Linear Regression with MCMC}
\begin{document}

\twocolumn[
\icmltitle{Differentially Private Distributed Bayesian Linear Regression with MCMC}




\begin{icmlauthorlist}
\icmlauthor{Barış Alparslan}{1}
\icmlauthor{Sinan Yıldırım}{1}
\icmlauthor{Ş. İlker Birbil}{2}
\end{icmlauthorlist}

\icmlaffiliation{1}{Faculty of Engineering and Sciences, Sabancı University, Turkey}
\icmlaffiliation{2}{Amsterdam Business School, University of Amsterdam, The Netherlands}

\icmlcorrespondingauthor{Barış Alparslan}{baris.alparslan@sabanciuniv.edu}

\icmlkeywords{Machine Learning, ICML}

\vskip 0.3in
]



\printAffiliationsAndNotice{}  

\begin{abstract}
We propose a novel Bayesian inference framework for distributed differentially private linear regression. We consider a distributed setting where multiple parties hold parts of the data and share certain summary statistics of their portions in privacy-preserving noise. We develop a novel generative statistical model for privately shared statistics, which exploits a useful distributional relation between the summary statistics of linear regression. We propose Bayesian estimation of the regression coefficients, mainly using Markov chain Monte Carlo algorithms, while we also provide a fast version that performs approximate Bayesian estimation in one iteration. The proposed methods have computational advantages over their competitors. We provide numerical results on both real and simulated data, which demonstrate that the proposed algorithms provide well-rounded estimation and prediction.
\end{abstract}

\section{Introduction} \label{introduction}
Linear regression is a mathematical method that lies at the core of statistical research. Many researchers have been working on linear regression since the 19th century, and hence, many well-known solution methods exist. On a separate note, privacy-preserving statistical learning has gained popularity and importance in recent years, with \emph{differential privacy} prevailing as the most commonly used definition for privacy \cite{Dwork_2006, dwork_2014_algorithmic, dankar_2013_DP}.  As a result, there is a recent but growing interest in differentially private linear regression.

Many works in the data privacy literature do not mainly focus on regression but are motivated by or can be applied to regression. As an example, differentially private empirical risk minimisation \citep{Chaudhuri_et_al_2009, Bassily_et_al_2014, abadi_et_al_2016, Nurdan_et_al_2022_DPopt} can be applied to regression once it is cast as a data-driven optimisation problem. Many general-purpose Bayesian differentially private estimation methods can also be used in regression problems. \citet{Williams_et_al_2010} is one of the first works that considered a hierarchical model for the privatised data and Bayesian estimation for the model parameters. \citet{Zhang_Rubinstein_Dimitrakakis_2016} analyse several differential privacy mechanisms for posterior sampling and suggest using these mechanisms also for linear regression. \citet{dimitrakakis2017differential} developed a posterior sampling query algorithm to combine differential privacy and Bayesian inference.  Contrary to those one-sample approaches, general-purpose differentially private Markov chain Monte Carlo (MCMC) algorithms, which aim to identify the whole posterior distribution via iterative sampling, can also be applied to regression \citep{Wang_2015_PrivacyMC, foulds_et_al_2016, Wang_2015_PrivacyMC, Yildirim_and_Ermis_2019, Heikkila_et_al_2019, Gong_2022, Alparslan_and_Yildirim_2022, Ju_et_al_2022}. 

Several works in the literature are somewhat more directly related to differentially private regression. \citet{Zhang_et_al_2012} have suggested a functional mechanism method, which is based on perturbing polynomial objective functions with privacy-preserving noise. As an alternative, \citet{Dwork_AnalyzeGauss_2014, Wang_2018_RevisitingDP} considered perturbation of summary statistics. \citet{Alabi_et_al_2022} provide a technical discussion on different point estimation methods for differentially private simple linear regression, (that is when we have a single feature). \citet{Ferrando_et_al_2022} present a method to compute confidence intervals for the coefficients of linear regression. \citet{Cai_et_al_2021} study the rates of convergence for parameter estimation with differential privacy via output perturbation, where a non-private estimator is perturbed. All those works consider point estimation of the linear regression parameters.

In this paper, we focus on differentially private distributed Bayesian inference for the parameters of linear regression. We use a novel hierarchical model that relies on a distributional relationship (Proposition \ref{prop: cond dist of Z given S}) between the summary statistics of linear regression, which, to the best of our knowledge, has not been exploited so far.  We propose Bayesian inference algorithms that take perturbations of summary statistics as observations. The general inferential tool we pick in this paper is MCMC, a well-known framework for iterative sampling from posterior distributions. As we shall see, the proposed MCMC algorithms in this paper already have lower computational complexities per iteration than their closest competitors in \citet{Bernstein_and_Sheldon_2019}. Additionally, we also propose much faster Bayesian estimation methods that perform estimation in one iteration. Finally, for the sake of generality, we assume a distributed setting where the total dataset is shared among multiple parties (data nodes), who want to collaborate for the inference of a common parameter, see \textit{e.g.}, \ \citet{Heikkila_et_al_2017} for such a setting. The non-distributed setting is just a special case (single data holder) for our methodology.

This paper has connections with several works in the literature, yet it has significant differences from each of those, as we shall explain below. 

For the privacy-preserving mechanism, we consider adding noise to summary statistics of linear regression, similarly to \citet{Wang_2018_RevisitingDP, Bernstein_and_Sheldon_2019}. The adaSSP framework of \citet{Wang_2018_RevisitingDP} also motivates the fast Bayesian estimation methods developed in this paper. However, adaSSP is a point estimation method while we aim for a posterior distribution. The latter work, \citet{Bernstein_and_Sheldon_2019}, is particularly related to this paper as they also study Bayesian linear regression with differential privacy using perturbed statistics of data. However, there are some important differences between our work and that of \citet{Bernstein_and_Sheldon_2019}. These differences stem from the choice of summary statistics and the consequent hierarchical structure used for modelling linear regression. Those modelling differences lead to significant differences in the inference methods as well as significant computational advantages for our methods. Specifically, the computational complexity of our methods is $\mathcal{O}(d^{3})$, where $d$ is the number of features. This order is much less than $\mathcal{O}(d^{6})$ of \citet{Bernstein_and_Sheldon_2019}. Finally, neither \citet{Wang_2018_RevisitingDP} nor \citet{Bernstein_and_Sheldon_2019} has considered a distributed learning setting as we do in this paper, although both works can be modified for the distributed setting after moderate modifications.

\citet{foulds_et_al_2016} and  \citet{Heikkila_et_al_2017} are other differentially Bayesian inference methods that target posterior distributions of perturbed summary statistics of sensitive data. \citet{Heikkila_et_al_2017} is particularly interesting because they consider a distributed setting and present linear regression as their showcase example. However, we differ from those works in the way we model the perturbed statistics and in the choice of inference methods. Specifically, \citet{foulds_et_al_2016, Heikkila_et_al_2017} treat the perturbed statistics as if not perturbed, while we correctly incorporate the effect of perturbation in our model.

Recently, \citet{Alparslan_and_Yildirim_2022}  and \citet{Ju_et_al_2022} employ data augmentation for modelling sensitive and privatised data and propose MCMC for Bayesian inference, the latter having linear regression as a major application. Their methods have $\mathcal{O}(n)$ complexity per iteration in general where $n$ is the number of instances in the data set, which can be slow when $n$ is large. In contrast, our methods are scalable in data size since their computational complexities do not depend on $n$. We note that \citet[Section 4.2]{Alparslan_and_Yildirim_2022} also present an MCMC method scalable with $n$ that exploits the approximate normality of additive summary statistics. However, a direct application of that would lead to an algorithm with $\mathcal{O}(d^{6})$ computational complexity (per iteration), like in \citet{Bernstein_and_Sheldon_2019}.

The paper is organised as follows: In Section \ref{sec: DP}, we review differential privacy. In Section \ref{sec: A model for differentially private distributed linear regression}, we lay out the hierarchical model for differentially private distributed linear regression with perturbed summary statistics. In Section \ref{sec: Algorithms for Bayesian inference of theta}, we present and discuss the aspects of the proposed inference algorithms. In Section \ref{sec: Numerical experiments}, we provide numerical experiments. We conclude in Section \ref{sec: Conclusion}. 

\textbf{Notation:} Matrices and vectors are shown in bold-face notation. For a matrix $\bm{A}$, its transpose, trace, and determinant (if they exist) are $\bm{A}^{T}$, $\text{tr}(\bm{A})$, and $\vert \bm{A} \vert$, respectively. $\bm{I}_{d}$ is the $d \times d$ identity matrix. For any sequence $\{a_{i}\}_{i \geq 0}$, we write $a_{i:j}$ for $(a_{i}, \ldots, a_{j})$. We write $x \sim P$ to mean the random variable $x$ has distribution $P$. $\mathcal{N}(\bm{m}, \bm{\Sigma})$ stands for the multivariate normal distribution with mean $\bm{m}$ and covariance $\bm{\Sigma}$. The Wishart and inverse-Wishart distributions, each with scale matrix $\bm{\Lambda}$ and $\kappa$ degrees of freedom, are shown as $\mathcal{W}(\bm{\Lambda}, \kappa)$ and $\mathcal{IW}(\bm{\Lambda}, \kappa)$, respectively. $\mathcal{IG}(a, b)$ stands for the inverse-gamma distribution with shape and scale parameters $a$ and $b$. We augment those notations, \textit{e.g.,} with $\bm{x}$, to denote the respective probability density functions (pdf), \textit{e.g.}, $\mathcal{N}(\bm{x}; \bm{m}, \bm{\Sigma})$. 

\section{Differential Privacy}
\label{sec: DP}

Differential privacy \citep{Dwork_2006, Dwork_2008} concerns randomised algorithms that run on sensitive, or usually private, data. A randomised algorithm takes an input data set $D \in \mathcal{D}$ and returns a random output in $\mathcal{O}$ where the randomness is intrinsic to the algorithm. A differentially private algorithm constrains the difference between the probability distributions of the output values obtained from neighbouring data sets. We say two data sets are \emph{neighbours} if they have the same size and differ by a single element, corresponding to a single individual's piece of data.
\begin{definition}[Differential privacy]
A randomised algorithm $M:\mathcal{D} \mapsto \mathcal{O}$ is $(\epsilon, \delta)$-differentially private (DP) if for any pair of neighbouring data sets $D, D' \in \mathcal{D}$ and for any subset $O \subseteq \mathcal{O}$ of the of support domain, it satisfies
\[
\mathbb{P}[M(D) \in O]\leq e^{\epsilon} \mathbb{P}[M(D') \in O] + \delta.
\]
\end{definition}
The definition implies that we have more privacy with smaller $(\epsilon, \delta)$ pair. Privacy-preserving algorithms often use noise-adding mechanisms. A popular noise-adding mechanism is the \textit{Gaussian mechanism} \citep{Dwork_et_al_2006}, which perturbs a function $f: \mathcal{D} \mapsto \mathbb{R}^{k}$ of the sensitive data, for some $k \geq 1$, with a random noise drawn from the Gaussian distribution. The amount of the added noise depends on the $L_{2}$-\textit{sensitivity} of the function, given by
\[
\Delta_{f} = \max_{\text{neighbour} D_{1},D_{2} \in \mathcal{D}} \norm{f(D_{1})-f(D_{2})}_{2}.
\]
An $(\epsilon, \delta)$-DP Gaussian mechanism returns
\begin{equation} \label{eq: Gaussian mechanism}
f(D) + \Delta_{f} \sigma(\epsilon, \delta)  \bm{v}, \quad \bm{v} \sim \mathcal{N}(\bm{0}, \bm{I}_{k})
\end{equation}
upon taking $D$ as the input, where the quantity $\sigma(\epsilon, \delta)$ ensures $(\epsilon, \delta)$-DP. In this work, we take $\sigma(\epsilon, \delta)$ as the analytical solution given in \citet[Algorithm 1]{Balle_and_Wang_2018} due to its tightness. The Gaussian mechanism is also central to other forms of privacy, such as zero-concentrated DP \citep{Bun_and_Steinke_2016} and Gaussian DP \citep{Dong_et_al_2022}. 

This paper considers $(\epsilon, \delta)$-DP as the type of privacy and the Gaussian mechanism to generate noisy observations. Moreover, the proposed methods in this paper never use the sensitive data given the noisy observations generated using the Gaussian mechanism, hence exploiting the \emph{post-processing} property of differential privacy \citep{Dwork_Algorithmic_2014}. 
\begin{theorem}[Post-processing] \label{thm: post-processing}
If $M: \mathcal{D} \mapsto \mathcal{O}$ be $(\epsilon,\delta)$-DP and let $f: O \rightarrow O'$ be another mapping independent of $D$ given $M(D)$. Then $f_{M}: \mathcal{D} \mapsto \mathcal{O}'$ with $f_{M}(D) = f(M(D))$ is $(\epsilon,\delta)$-DP. 
\end{theorem}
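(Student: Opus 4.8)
The plan is to derive the statement directly from the definition of $(\epsilon,\delta)$-DP, splitting into the case where the post-processing map $f$ is deterministic and the genuinely randomised case.

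\textbf{Deterministic $f$.} Here the argument is essentially one line. I would fix an arbitrary pair of neighbouring data sets $D, D' \in \mathcal{D}$ and an arbitrary measurable subset $O' \subseteq \mathcal{O}'$. Since $f_{M}(D) = f(M(D))$ with $f$ deterministic and measurable, the event $\{f_{M}(D) \in O'\}$ equals $\{M(D) \in O\}$ for $O := f^{-1}(O')$, which is a measurable subset of $\mathcal{O}$. Applying the $(\epsilon,\delta)$-DP guarantee of $M$ to the set $O$ then gives
\[
\mathbb{P}[f_{M}(D) \in O'] = \mathbb{P}[M(D) \in O] \le e^{\epsilon}\,\mathbb{P}[M(D') \in O] + \delta = e^{\epsilon}\,\mathbb{P}[f_{M}(D') \in O'] + \delta ,
\]
which is exactly the defining inequality for $f_{M}$; as $D, D', O'$ were arbitrary, $f_{M}$ is $(\epsilon,\delta)$-DP.

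\textbf{Randomised $f$.} When $f$ carries its own internal randomness, the hypothesis that $f$ is ``independent of $D$ given $M(D)$'' says precisely that $f$ is described by a Markov kernel $K$ from $\mathcal{O}$ to $\mathcal{O}'$, i.e.\ $\mathbb{P}[f_{M}(D) \in O' \mid M(D) = y] = K(y, O')$ for every $y$, with no further dependence on $D$. I would first promote the DP inequality from events to bounded test functions: for any measurable $g : \mathcal{O} \to [0,1]$, writing $g(y) = \int_0^1 \mathbf{1}\{g(y) > t\}\,dt$, applying the DP inequality of $M$ to each of the measurable sets $\{g > t\}$, integrating over $t \in [0,1]$, and using Tonelli's theorem to swap the order of integration, one gets $\mathbb{E}[g(M(D))] \le e^{\epsilon}\,\mathbb{E}[g(M(D'))] + \delta$. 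Then, taking $g(y) = K(y, O')$ — which is $[0,1]$-valued and measurable in $y$ by the definition of a kernel — and using the law of total probability $\mathbb{P}[f_{M}(D) \in O'] = \mathbb{E}[K(M(D), O')]$, I obtain $\mathbb{P}[f_{M}(D) \in O'] \le e^{\epsilon}\,\mathbb{P}[f_{M}(D') \in O'] + \delta$, which finishes the proof.

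\textbf{Where any difficulty lies.} The substantive content is the one-line inequality in the deterministic case; everything else is measure-theoretic hygiene — realising ``$f$ independent of $D$ given $M(D)$'' as a bona fide Markov kernel, checking measurability of $y \mapsto K(y, O')$, and the layer-cake/Tonelli manipulation in the bounded-function step. In the concrete setting of this paper, where all spaces are Euclidean and all maps Borel, these points are routine. An alternative that stays with events throughout is to represent the randomness of $f$ as $f(y) = h(y, U)$ with $U$ independent of $D$, observe that $D \mapsto (M(D), U)$ is still $(\epsilon,\delta)$-DP (a short Fubini argument on the product-form laws $\mu_D \otimes \nu$, sectioning over $U$), and then apply the deterministic case to the measurable map $h$.
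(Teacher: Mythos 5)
Your proof is correct. Note, however, that the paper itself does not prove this theorem: it is quoted as a standard result with a citation to Dwork's \emph{Algorithmic Foundations of Differential Privacy}, so there is no in-paper argument to compare yours against line by line. Measured against the usual textbook proof, your route is the cleaner general-measurable-space version: the deterministic case via preimages is the same one-liner, but for randomised post-processing you model ``independent of $D$ given $M(D)$'' as a Markov kernel and promote the DP inequality from events to $[0,1]$-valued test functions via the layer-cake representation and Tonelli, whereas the classical argument decomposes a randomised map into a convex combination of deterministic ones --- a step that is delicate outside discrete output spaces. Your approach avoids that delicacy at the cost of a little measure-theoretic bookkeeping, all of which you handle correctly (measurability of $y \mapsto K(y,O')$, the law of total probability, and the swap of integrals), and your alternative formulation $f(y)=h(y,U)$ with $U$ independent of $D$ is also sound. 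This is exactly the level of rigour appropriate for the setting of the paper, where the mechanism outputs live in Euclidean spaces and all maps are Borel.
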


\section{Differentially Private Distributed Linear Regression} \label{sec: A model for differentially private distributed linear regression}

This section presents a new hierarchical model for differentially private distributed linear regression. For ease of exposition, we first present a model with a single data holder, then generalise the model for the distributed setting.

\subsection{Basic Model and Privacy Setup} \label{sec:noise_for_S_Z}
Suppose we have a sequence of random variables $\{ (\bm{x}_{i}, y_{i}): i = 1, \ldots, n \}$, where $\bm{x}_{i} \in \mathcal{X} \subseteq \mathbb{R}^{d \times 1}$ are the feature vectors and $y_{i} \in \mathcal{Y} \subseteq  \mathbb{R}$ is the $i$'th response variable. We consider the normal linear regression to model the dependency between $\bm{x}_{i}$ and $y_{i}$. Specifically, 
\[
y_{i} = \bm{x}_{i}^{T} \bm{\theta} + e_{i}, \quad e_{i} \overset{\text{i.i.d.}}{\sim} \mathcal{N}(0, \sigma_{y}^{2}), \quad i = 1, \ldots, n,
\]
where $\bm{\theta} \in \mathbb{R}^{d}$ is the vector of the linear regression coefficients. We assume that the feature vectors $\bm{x}_{i}$'s are i.i.d.\ with distribution $P_{\bm{x}}$. A particular case of interest will be one where $P_{\bm{x}}$ can be assumed to be a normal distribution. However, we will also present algorithms for general $P_{\bm{x}}$ that is not (even approximately) normal.

In matrix notation, the above can shortly be expressed as 
\[
\bm{y} = \bm{X} \bm{\theta} + \bm{e}, \quad \bm{e} \sim \mathcal{N}(\bm{0}, \sigma_{y}^{2} \bm{I}_{n}),
\]
where $\bm{X} = \begin{bmatrix} \bm{x}_{1}^{T} & \ldots & \bm{x}_{n}^{T} \end{bmatrix}^{T}$ is the so-called design matrix, $\bm{y} = \begin{bmatrix} y_{1} & \ldots & y_{n} \end{bmatrix}^{T}$. Additionally, we also define the summary statistics of $\bm{X}$ and $\bm{y}$ given by
\[
\bm{S} := \bm{X}^{T} \bm{X} \text { and } \bm{z} := \bm{X}^{T} \bm{y}, 
\]
respectively. In this paper, we assume a data-sharing scenario where $\bm{S}$ and $\bm{z}$ are privately released as the noisy summary statistics $\hat{\bm{S}}$ and $\hat{\bm{z}}$, constructed as
\begin{align}
\hat{\bm{S}} &= \bm{S} + \sigma_{s} \bm{M}, \label{eq: cond dist of hatS} \\
\hat{\bm{z}} &= \bm{z} + \sigma_{z} \bm{v}, \quad \bm{v} \sim \mathcal{N}(\bm{0}, \bm{I}_{d}), \label{eq: cond dist of hatZ}
\end{align}
where $\bm{M}$ is a $d\times d$ symmetric matrix with its upper triangular elements drawn from $\mathcal{N}(0, 1)$. \citet{Dwork_AnalyzeGauss_2014} arrange $\sigma_{s}$ and $\sigma_{z}$ so that both \eqref{eq: cond dist of hatS} and \eqref{eq: cond dist of hatZ} are $(\epsilon/2, \delta/2)$ differentially private, leading to $(\epsilon, \delta)$-DP overall. Differently than \citet{Dwork_AnalyzeGauss_2014}, we set
\[
\sigma_{s} = \sigma_{z} = \Delta_{sz} \sigma(\epsilon, \delta),
\]
where $\sigma(\epsilon, \delta)$ is given in \citet[Algorithm 1]{Balle_and_Wang_2018}, and $\Delta_{sz}$ is the overall $L_{2}$ sensitivity of $[\bm{S}, \bm{z}]$, given by
\[
\Delta_{sz} = \sqrt{\| X \|^{4} + \| X \|^{2} \| Y \|^{2}}
\]
with $\| X \| = \max_{\bm{x} \in \mathcal{X}} \| \bm{x} \|_{2}$ and $\| Y \| = \max_{y \in \mathcal{Y}} | y |$.

Finally, we assign prior distributions for $\bm{\theta}$, $\sigma_{y}^{2}$ as
\begin{equation} \label{eq: prior_theta_var_Y_sigma}
\bm{\theta} \sim \mathcal{N}(\bm{m}, \bm{C}), \quad \sigma_y^2 \sim \mathcal{IG}(a,b).
\end{equation}

Based on the above relations, we shall represent a hierarchical model that enables Bayesian inference of $\bm{\theta}$ given $\hat{\bm{S}}$ and $\hat{\bm{z}}$. One important element of our modelling approach is the following result that establishes the conditional distribution of $\bm{z}$ given $\bm{S}$, $\bm{\theta}$, and $\sigma_{y}^{2}$.
\begin{proposition} \label{prop: cond dist of Z given S}
For the normal linear regression model, we have
\[
\bm{z} | \bm{S}, \bm{\theta}, \sigma_{y}^{2} \sim \mathcal{N}(\bm{S} \bm{\theta}, \bm{S} \sigma_{y}^{2}).
\]
\end{proposition}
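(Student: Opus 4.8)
The plan is to work directly from the structural equation $\bm{y} = \bm{X}\bm{\theta} + \bm{e}$. Left-multiplying by $\bm{X}^{T}$ gives
\[
\bm{z} = \bm{X}^{T}\bm{y} = \bm{X}^{T}\bm{X}\bm{\theta} + \bm{X}^{T}\bm{e} = \bm{S}\bm{\theta} + \bm{X}^{T}\bm{e},
\]
so that, once we condition on $\bm{X}$, $\bm{\theta}$ and $\sigma_{y}^{2}$, the only remaining randomness in $\bm{z}$ is the linear image $\bm{X}^{T}\bm{e}$ of the regression noise vector.

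First I would condition on the whole design matrix $\bm{X}$ (together with $\bm{\theta}$ and $\sigma_{y}^{2}$). Since $\bm{e} \sim \mathcal{N}(\bm{0}, \sigma_{y}^{2}\bm{I}_{n})$ is independent of $\bm{X}$ — the feature vectors $\bm{x}_{i}$ being drawn from $P_{\bm{x}}$ independently of the regression noise — the quantity $\bm{X}^{T}\bm{e}$ is a fixed linear transformation of a Gaussian vector, hence Gaussian, with mean $\bm{X}^{T}\bm{0} = \bm{0}$ and covariance $\bm{X}^{T}(\sigma_{y}^{2}\bm{I}_{n})\bm{X} = \sigma_{y}^{2}\bm{X}^{T}\bm{X} = \sigma_{y}^{2}\bm{S}$. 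Therefore
\[
\bm{z} \mid \bm{X}, \bm{\theta}, \sigma_{y}^{2} \sim \mathcal{N}(\bm{S}\bm{\theta}, \sigma_{y}^{2}\bm{S}).
\]

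The one point that needs care is that the statement conditions on $\bm{S}$, not on $\bm{X}$. But the right-hand side above depends on $\bm{X}$ only through $\bm{S} = \bm{X}^{T}\bm{X}$: any two design matrices with the same Gram matrix induce the same conditional law for $\bm{z}$. Consequently, averaging this conditional law over the conditional distribution of $\bm{X}$ given $(\bm{S}, \bm{\theta}, \sigma_{y}^{2})$ leaves it unchanged, which is precisely the claim $\bm{z} \mid \bm{S}, \bm{\theta}, \sigma_{y}^{2} \sim \mathcal{N}(\bm{S}\bm{\theta}, \sigma_{y}^{2}\bm{S})$. I expect this coarsening step — rather than any computation — to be the only real content of the proof; the Gaussian algebra is routine. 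A minor caveat worth stating in passing is that when $n < d$ the matrix $\bm{S}$ is singular, so $\mathcal{N}(\bm{S}\bm{\theta}, \sigma_{y}^{2}\bm{S})$ is to be read as the corresponding degenerate Gaussian supported on the column space of $\bm{S}$; none of the argument above is affected.
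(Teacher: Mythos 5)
Your proposal is correct and follows essentially the same route as the paper's proof: first obtain $\bm{z}\mid\bm{X},\bm{\theta},\sigma_{y}^{2}\sim\mathcal{N}(\bm{X}^{T}\bm{X}\bm{\theta},\sigma_{y}^{2}\bm{X}^{T}\bm{X})$ from $\bm{z}=\bm{S}\bm{\theta}+\bm{X}^{T}\bm{e}$, then observe that this conditional law depends on $\bm{X}$ only through $\bm{S}=\bm{X}^{T}\bm{X}$ and conclude by conditioning on the coarser variable. The paper merely makes your final ``coarsening'' step explicit via a change-of-variables computation of $\mathbb{P}(\bm{S}\in A,\bm{z}\in B)$, which is the rigorous form of the averaging argument you sketch.
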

\begin{proof}
First, note that, 
\begin{align*}
\mathbb{E} [\bm{z} | \bm{X}, \bm{\theta}, \sigma_{y}^{2}] &= \mathbb{E}[\bm{X}^{T}\bm{X} \bm{\theta} + \bm{X}^{T} \bm{e} ] \\
 \text{Cov}(\bm{z}|\bm{X}, \bm{\theta}, \sigma_{y}^{2}) &= \bm{X}^T\bm{X}\sigma_{y}^2
\end{align*}
Hence, the conditional density of $\bm{z}$ given $\bm{X}$, $\bm{\theta}$, and $\sigma_{y}^{2}$ is
\begin{equation}\label{eq: z given X theta and sigmay}
p(\bm{z} | \bm{X}, \bm{\theta}, \sigma_{y}^{2}) = \mathcal{N}(\bm{z}; \bm{X}^{T}\bm{X} \bm{\theta}, \bm{X}^{T}\bm{X} \sigma_{y}^{2}).
\end{equation}
Let $\nu$ and $\omega$ denote the probability distributions of $\bm{x}$ and $\bm{S}$, respectively. By change of variables $\bm{S} = \bm{X}^{T}\bm{X}$, we have 
\begin{equation} \label{eq: change of variables}
\int  f(\bm{S}) \omega(\mathrm{d}\bm{S}) = \int f(\bm{X}^{T} \bm{X}) \nu(\mathrm{d}\bm{X})
\end{equation}
for any real-valued measurable function $f: \mathcal{S}_{d} \mapsto \mathbb{R}$, from the set $\mathcal{S}_{d}$ of $d \times d$ positive definite matrices. Also, for any pair of sets $A \subseteq \mathcal{S}_{d}$ and $B \in \mathbb{R}^{d \times 1}$, we can write
\begin{equation*} 
\mathbb{P}(\bm{S} \in A, \bm{z} \in B) = \int \mathbb{I}_{A}(\bm{X}^{T} \bm{X}) \mathbb{P}(\bm{z} \in B | \bm{X}) \nu(\mathrm{d}\bm{X}).
\end{equation*}
The integrand above depends on $\bm{X}$ \emph{through $\bm{X}^{T} \bm{X}$ only}, since $\mathbb{P}(\bm{z} \in B | \bm{X}) = \int_{B} \mathcal{N}(\bm{z}; \bm{X}^{T} \bm{X} \bm{\theta}, \bm{X}^{T} \bm{X} \sigma_{y}^{2}) \mathrm{d}\bm{z}$. Therefore, applying the relation in \eqref{eq: change of variables} to the RHS with the choice $f(\bm{S}) = \mathbb{I}_{A}(\bm{S}) \int_{B} \mathcal{N}(\bm{z}; \bm{S} \bm{\theta}, \bm{S} \sigma_{y}^{2}) \mathrm{d} \bm{z}$, we can rewrite the joint probability as
\begin{align*}
\mathbb{P}(\bm{S} \in A, \bm{z} \in B) &= \int \mathbb{I}_{A}(\bm{S}) \int_{B} \mathcal{N}(\bm{z}; \bm{S} \bm{\theta}, \bm{S} \sigma_{y}^{2}) \mathrm{d}\bm{z} \omega(\mathrm{d}\bm{S}) \\
&= \int_{A}  \int_{B} \mathcal{N}(\bm{z}; \bm{S} \bm{\theta}, \bm{S} \sigma_{y}^{2}) \mathrm{d}\bm{z} \omega(\mathrm{d}\bm{S}).
\end{align*}
This shows that the variables $\bm{z}, \bm{S}$ have the joint probability distribution $\omega(\mathrm{d}\bm{S}) \mathcal{N}(\bm{z}; \bm{S} \bm{\theta}, \bm{S} \sigma_{y}^{2}) \mathrm{d}\bm{z}$ and that the conditional probability density is $p(\bm{z} | \bm{S}, \bm{\theta}, \sigma_{y}^{2}) = \mathcal{N}(\bm{z}; \bm{S}\bm{\theta}, \bm{S} \sigma_{y}^{2})$, as claimed.
\end{proof}

At this point, some important modelling differences between our work and \citet{Bernstein_and_Sheldon_2019} are worth discussing. In \citet{Bernstein_and_Sheldon_2019}, the central limit theorem (CLT) is applied to $ \left[ \bm{S}, \bm{z}, \bm{y}^{T} \bm{y} \right]$, leading to a normality assumption for the whole vector. In contrast, we use the \emph{exact} conditional distribution $p(\bm{z} | \bm{S}, \bm{\theta}, \sigma^{2})$ thanks to Proposition \ref{prop: cond dist of Z given S}. Moreover, unlike \citet{Bernstein_and_Sheldon_2019}, we do \emph{not} require a noisy version $\bm{y}^{T} \bm{y}$, hence have a slight advantage of using less privacy-preserving noise. In summary, our model has a different hierarchical structure and requires less privacy-preserving noise.

\subsection{Distributed Setting} \label{sec: Distributed setting}
Here we extend our model to the distributed setting, where the total data are shared among $J \geq 1$ data holders as 
\begin{equation} \label{eq: partitioning the X y data}
(\bm{X}, \bm{y}) = \{ (\bm{X}_{j}, \bm{y}_{j});  j = 1, \ldots, J \}.
\end{equation}
We let $n_{i}$ be number of rows in each $\bm{x}_{i}$, so that $n = n_{1} + \ldots + n_{J}$. 
Each data holder $j$ shares their own summary statistics $\bm{S}_{j} = \bm{X}_{j}^{T} \bm{X}_{j}$, $\bm{z}_{j} = \bm{X}_{j}^{T} \bm{y}_{j}$ with privacy-preserving noise 
\begin{equation}
\begin{aligned} 
\hat{\bm{S}}_{j} &= \bm{S}_{j} + \sigma_{s} \bm{M}_{j}, \\
\hat{\bm{z}}_{j} &= \bm{z} + \sigma_{z}^{2} \bm{v}_{j}, \quad \bm{v}_{j} \sim \mathcal{N}(\bm{0}, \bm{I}_{d}).
\end{aligned}
\label{eq: noisy partitions}
\end{equation}
Note that, to preserve a given $(\epsilon, \delta)$-DP overall, each party must provide that level of privacy for their data, hence $\sigma_{s}^{2}$ and $\sigma_{z}^{2}$ are the same as before. The hierarchical structure of the overall model (specified for normally distributed $\bm{x}_{i}$'s) is shown in Figure \ref{fig: dplr_model}.
\begin{figure}[ht]
\centerline{
\includegraphics[scale = 0.85]{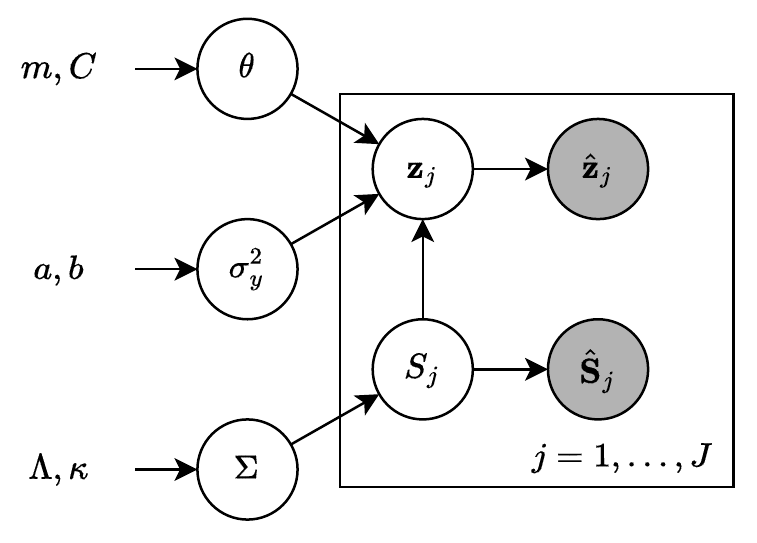}}
\caption{Differentially private distributed linear regression model (specified for normally distributed $\bm{x}_{i}$'s.)}
\label{fig: dplr_model}
\end{figure}

The distributed setting deserves separate consideration than the single data holder case for a couple of reasons: Firstly, the node-specific observations $(\hat{\bm{S}}_{1}, \hat{\bm{z}}_{1}), \ldots, (\hat{\bm{S}}_{J}, \hat{\bm{z}}_{J})$ are altogether statistically \emph{more informative} on $\theta$ than their aggregates $\sum_{j = 1}^{J} \hat{\bm{S}}_{j}$ and $\sum_{j = 1}^{J} \hat{\bm{z}}_{j}$. This is because the aggregate versions are \emph{not} sufficient statistics of the node-specific observations $(\hat{\bm{S}}_{1}, \hat{\bm{z}}_{1}), \ldots, (\hat{\bm{S}}_{J}, \hat{\bm{z}}_{J})$ with respect to $\theta$ (even when $\sigma_{y}^{2}$ is known.) Therefore, when the node-specific observations are available, one should not, in principle, trivially aggregate them and apply an inference method designed for $J = 1$ using those aggregates. 

Secondly, the partitioning of data as in \eqref{eq: partitioning the X y data} can be relevant to data privacy applications even \emph{outside} the distributed learning framework, rendering the methodology in Section \ref{sec: Algorithms for Bayesian inference of theta} useful in a broader sense. For example, batches of $(\bm{x}, y)$-type of data may be donated to a common data collector as in \eqref{eq: noisy partitions}. At this point, a particular and interesting relation exists with pan-privacy applications \citep{dwork2010pan}. Imagine that sensitive data from individuals are collected sequentially in time and the data holder is concerned about possible intrusions into the memory where the sensitive data are stored. Then, one possible way to ensure the privacy of the data against such possible intrusions, which is the promise of pan-privacy, is to store the noisy statistics of every new batch of data and erase the original sensitive data. Then, at any time, the data collector has data of the form $(\hat{\bm{S}}_{1}, \hat{\bm{z}}_{1}), \ldots, (\hat{\bm{S}}_{J}, \hat{\bm{z}}_{J})$, each pair corresponding to a batch, the $J$th pair being the current batch. As a result, inference algorithms in Section \ref{sec: Algorithms for Bayesian inference of theta} can be applied.

\section{Algorithms for Bayesian Inference} \label{sec: Algorithms for Bayesian inference of theta}
Bayesian inference targets the posterior distribution of the latent variables of the model, in particular $\bm{\theta}$, given the observations $\hat{\bm{S}}_{1:J}$ and $\hat{\bm{z}}_{1:J}$. We present several Bayesian inference algorithms for the hierarchical model described in the previous section. In addition to other concerns like computational budget, the choice among those approaches mainly depends on the specification of $P_{\bm{x}}$ as the distribution of $\bm{S}$ directly depends on it. In this paper, we consider the following two cases and devise algorithms for each of them:
\begin{enumerate}[leftmargin = *]
\item In some cases it may be adequate to specify $P_{\bm{x}} = \mathcal{N}(\bm{0}, \bm{\Sigma}_{x})$. This leads to $\bm{S} | \bm{\Sigma}_{x} \sim \mathcal{W}(\bm{\Sigma}_{x},n)$. Further, to account for the uncertainty about the covariance $\bm{\Sigma}_{x}$, one can treat it as a random variable with $\bm{\Sigma}_{x} \sim \mathcal{IW}(\bm{\Lambda}, \kappa)$. Figure \ref{fig: dplr_model} shows the hierarchical structure of the distributed setting with those specifications. We defer discussing the conflict between the normality and boundedness assumptions to Remark \ref{rem: normality and boundedness} towards the end of Section \ref{sec: MCMC with normally distributed features}.
\item As the second case, we assume a general (non-normal) $P_{\bm{x}}$. A normal approximation, based on the CLT, could be considered for the distribution $\bm{S}$ \citep{Andrew_Wishart_2011}. However, this would require the knowledge (or accurate estimation) of up to the fourth moments of $P_{\bm{x}}$ as well as expensive computations for sampling $\bm{S}$. We circumvent those difficulties by plugging in a point estimate of $\bm{S}$ given $\hat{\bm{S}}$ and use it during the sampling process as if it is the true $\bm{S}$ itself. Then, we develop two different algorithms for inference of $\bm{\theta}$, one being an MCMC algorithm and the other providing a closed form-solution for the posterior of $\bm{\theta}$ following a rough point-wise estimation of $\sigma_{y}^{2}$. Note that these algorithms with fixed $\bm{S}$ do not require a distribution for $\bm{x}$.

\end{enumerate}
Next, we provide the details of our approaches and the resulting algorithms.

\subsection{MCMC for Normally Distributed Features} \label{sec: MCMC with normally distributed features}

In this section, we present an MCMC algorithm for Bayesian inference for the differentially private distributed linear regression model when $P_{\bm{x}} = \mathcal{N}(\bm{0}, \bm{\Sigma}_{x})$ and $\bm{\Sigma}_{x} \sim \mathcal{IW}(\Lambda, \kappa)$. The latent variables involved in this variant are $\bm{\theta}, \bm{\Sigma}_{x}, \sigma_{y}^{2}, \bm{S}_{1:J}, \bm{z}_{1:J}$. Their posterior distribution given $\hat{\bm{S}}_{1:J}, \hat{\bm{z}}_{1:J}$ can be written as
\begin{align}
p(\bm{\theta},&\sigma_y^2,\bm{\Sigma}_{x},\bm{z}_{1:J}, \bm{S}_{1:J} | \hat{\bm{z}}_{1:J},\hat{\bm{S}}_{1:J}) \propto p(\bm{\theta})p(\sigma_y^2)p(\bm{\Sigma}_{x}) \nonumber \\
& \prod_{j = 1}^{J} p(\bm{z}_{j} | \bm{\theta}, \sigma_y^2, \bm{S}) p(\bm{S}_{j} |\bm{\Sigma}_{x}) p(\hat{\bm{S}}_{j}|\bm{S}_{j}) p(\hat{\bm{z}}_{j}|\bm{z}_{j}) .  \label{eq: joint distribution normal X} 
\end{align}
One could design an MCMC algorithm for this posterior distribution that updates $\bm{\theta}$, $\sigma_y^2$, $\bm{\Sigma}_{x}$, $\bm{z}_{1:J}$, $\bm{S}_{1:J}$ in turn based on their full conditional distributions. However, such an algorithm suffers from poor convergence because of a high posterior correlation between $\bm{\theta}$ and $\bm{z}_{1:J}$ (as verified in our numerical studies). It is well known that highly correlated variables result in poor convergence if they are updated one conditional on the other. To alleviate that problem, we work with the reduced model where $\bm{z}_{1:J}$ is integrated out. The reduced model has $\bm{\theta}, \bm{\Sigma}_{x}, \sigma_{y}^{2}$ as its latent variables, whose joint posterior distribution can be written as 
\begin{equation} \label{eq: reduced joint posterior}
\begin{aligned}
p(\bm{\theta},\sigma_y^2, &\bm{\Sigma}_{x}, \bm{S} | \hat{\bm{z}},\hat{\bm{S}}) \propto p(\bm{\theta}) p(\sigma_y^2) p(\bm{\Sigma}_{x}) 
\\ &\prod_{j = 1}^{J} p(\bm{S}_{j}|\bm{\Sigma}_{x})p(\hat{\bm{S}}_{j}| \bm{S}_{j}) p(\hat{\bm{z}}_{j}|\bm{S}_{j}, \bm{\theta}, \sigma_y^2),
\end{aligned}
\end{equation}
where  $p(\hat{\bm{z}}|\bm{S},\bm{\theta},\sigma_y^2) = \mathcal{N}(\hat{\bm{z}}; \bm{S} \bm{\theta}, \sigma_{y}^{2} \bm{S} \bm{\theta} + \sigma_{z}^{2} \bm{I}_{d})$. 

We would like to sample from the posterior distribution in \eqref{eq: reduced joint posterior} via MCMC that updates $\bm{\theta}$, $\sigma_y^2$, $\bm{\Sigma}_{x}$, $\bm{S}_{1:J}$ in turn based on their full conditional distributions. The variables $\bm{\theta}$ and $\bm{\Sigma}_{x}$ enjoy closed-form full conditional distributions (see Appendix \ref{sec: Derivations for MCMC-normalX} for the derivations):
\begin{align}
	\bm{\Sigma}_{x} | \bm{S}_{1:J}, \hat{\bm{S}}_{1:J}, \hat{\bm{z}}_{1:J} &\sim \mathcal{IW}\left(\bm{\Lambda} + \sum_{j=1}^{J} \bm{S}_{j}, \kappa+ n \right) \label{eq: Sigma full conditional}, \\
	\bm{\theta} | \sigma_{y}^{2}, \hat{\bm{z}}, \bm{S}_{1:J} & \sim \mathcal{N}(\bm{m}_p,\bm{\Sigma}_{p}), \label{eq: theta full conditional}
\end{align}
where the posterior moments for $\bm{\theta}$ are
\begin{align*}
\bm{\Sigma}_p^{-1} &= \sum_{j = 1}^{J} \bm{S}_{j} (\sigma_y^2 \bm{S}_j + \sigma_{z}^{2} \bm{I}_{d})^{-1}\bm{S}_j + \bm{C}^{-1}, \\
\bm{m}_p &= \bm{\Sigma}_{p} \left(\sum_{j = 1}^{J} \bm{S}_{j} (\sigma_{y}^{2} \bm{S}_{j} + \sigma_{z}^2 \bm{I}_{d})^{-1} \hat{\bm{z}}_{j} + \bm{C}^{-1}\bm{m}\right).
\end{align*}
The full-conditional distributions of $\bm{S}_{1:J}$ and $\sigma_{y}^{2}$ have no closed form; hence, we design Metropolis-Hastings (MH) moves to update them. For $\sigma_y^{2}$, one can simply use a random-walk MH move targeting $p(\sigma_{y}^{2}|\bm{\theta},\bm{S}_{1:J}, \hat{\bm{z}}_{1:J})$. For $\bm{S}_{1:J}$, their full conditional distribution can be factorised as
\begin{align*}\label{eq:S_update}
		& p(\bm{S}_{1:J}|\hat{\bm{S}}_{1:J}, \hat{\bm{z}}_{1:J}, \bm{\Sigma}_{x}, \sigma_{y}^{2}, \bm{\theta}) \\
		&\quad\quad\quad\quad\quad\quad\quad\quad = \prod_{j = 1}^{J} p(\bm{S}_{j} |\hat{\bm{S}}_{j}, \hat{\bm{z}}_{j},  \bm{\Sigma}_{x}, \sigma_y^2, \bm{\theta}),
\end{align*}
where each factor is given by
\begin{align*}
& p(\bm{S}_{j} |\hat{\bm{S}}_{j}, \hat{\bm{z}}_{j},  \bm{\Sigma}_{x}, \sigma_y^2, \bm{\theta}) \\
& \quad\quad\quad\quad\quad\quad \propto p(\hat{\bm{z}}_{j}|\bm{S}_{j},\bm{\theta},\sigma_y^2)p(\bm{S}_{j}|\bm{\Sigma}_{x})p(\hat{\bm{S}}_{j}|\bm{S}_{j}).\nonumber
\end{align*}
Thanks to that factorised form, each $\bm{S}_{j}$ can be updated with an MH move independently and in parallel. For the MH algorithm to update one $\bm{S}_{j}$, we propose a new value from a Wishart distribution as $\bm{S}_{j}' \sim \mathcal{W}(\bm{S}_j/\alpha, \alpha)$, which has mean $\bm{S}_{j}$ and variance determined by $\alpha > 0$. In our experiments, we adjust $\alpha$ using ideas from the adaptive MCMC framework \citep{Andrieu_and_Thoms_2008} to target an acceptance rate of around $0.2$. 

Algorithm \ref{alg: MCMC-normalX} represents the overall MCMC algorithm for the hierarchical model for differentially Bayesian distributed linear regression when $P_{\bm{x}}$ is a normal distribution with a random covariance matrix having an inverse-Wishart distribution. We call this algorithm \texttt{MCMC-normalX}.
\begin{algorithm}[t] 
\caption{\texttt{MCMC-normalX} - one iteration}
\label{alg: MCMC-normalX}	
\KwIn{Current values of $\bm{S}_{1:J}$, $\bm{\theta}$, $\sigma_y^2$, $\bm{\Sigma}_{x}$; observations $\hat{\bm{S}}_{1:J}$,$\hat{\bm{z}}_{1:J}$; noise variances $\sigma_{s}^{2}$, $\sigma_{z}^{2}$; proposal parameters $a$, $\sigma_{q}^{2}$; hyperparameters $a, b, \kappa, \bm{\Lambda}$, $\bm{m}$, $\bm{C}$.}
\KwOut{New sample of $\bm{\Sigma}_{x},\bm{S},\sigma_y^{2},\bm{\theta}$}	
 Sample $\bm{\Sigma}_{x}$ using \eqref{eq: Sigma full conditional}. 

\For{$j=1,2,\ldots J$}{
Update $\bm{S}_j$ via an MH move targeting $p(\bm{S}_{j} | \bm{\Sigma}_{x}, \bm{\theta}, \hat{\bm{z}}_{j})$.
}
Sample $\bm{\theta}$ using \eqref{eq: theta full conditional}.
  
Update $\sigma_y^2$ via an MH move targeting $p(\sigma_{y}^{2}|\bm{\theta},\bm{S}_{1:J}, \hat{\bm{z}}_{1:J})$.
\end{algorithm}

\begin{remark} \label{rem: normality and boundedness}
Admittedly, a potential concern is a conflict between the normality and boundedness assumptions (both for $\bm{x}$ and $y$). However, we also note that the collected data often happen to have some natural boundaries (which can be exploited to determine the sensitivity of the shared statistics), and yet the normal distribution is still used for modelling and subsequent inference mainly for the sake of tractability. With the normality assumption, one can implement computationally efficient algorithms at the expense of minor modelling inaccuracies. While we acknowledge the methodologies in \citet[Section 4.2]{Alparslan_and_Yildirim_2022} and \citet{Ju_et_al_2022} that can correctly incorporate the effect of truncation into inference, we remark that those methods pay the price of exactness by having $\mathcal{O}(n)$ computational complexity per iteration.
\end{remark}

\subsection{MCMC for Non-normally Distributed Features}
The normality assumption for $\bm{x}_{i}$'s in Section \ref{sec: MCMC with normally distributed features} may not be adequate for some data sets. Moreover, when $d$ is large, updating $\bm{S}_{j}$'s can be the bottleneck of \texttt{MCMC-normalX} in Algorithm \ref{alg: MCMC-normalX} in terms of computation time and convergence. We propose two algorithms to address both of those concerns. As it turns out, those algorithms provide accurate estimations even for the case of normally distributed features; see Section \ref{sec: Experiments with simulated data}. 

Our approach for non-normal $\bm{x}_{i}$'s is based on estimating $\bm{S}_{j}$'s from $\hat{\bm{S}}_{j}$s at the beginning, using some principled estimation method, and fixing $\bm{S}_{j}$'s to those estimates during the whole course of the inference procedure. In that way, we obtain a faster version of \texttt{MCMC-normalX} that is also well-suited for non-normal $\bm{x}_{i}$'s. Indeed, we observed in our experiments that this method outperforms the other methods for most of the cases, especially when the total number of nodes $J$ increases. We call this variant \texttt{MCMC-fixedS} and present it in Algorithm \ref{alg: MCMC-fixedS}.

As for estimating $\bm{S}_{j}$'s, one could simply take the privately shared $\hat{\bm{S}}_{j}$ as an estimator for $\bm{S}_{j}$, but  $\hat{\bm{S}}_{j}$ is not necessarily a positive (semi-)definite matrix. Instead, we consider the nearest positive semi-definite matrix to $\hat{\bm{S}}_{j}$  in terms of the Frobenius norm as the estimator of $\bm{S}_{j}$. (The nearest \emph{positive} definite matrix to $\hat{\bm{S}}_{j}$ does not exist.) To find the nearest positive semi-definite matrix, we follow \citet{Cheng_1998_psd_decomposition} and apply the following procedure for each $j = 1, \ldots, J$: (i) Calculate the eigendecomposition $\widehat{\bm{S}}_{j} = \bm{E}\bm{D}\bm{E}^T$, where $\bm{E}$ is a matrix of eigenvectors, and $\bm{D}$ is a diagonal matrix consisting of the eigenvalues $\lambda_{i}$. (ii) The nearest symmetric positive semi-definite matrix is $\widetilde{\bm{S}}_{j} = \bm{E}\bm{D}_{+}\bm{E}^T$, where $\bm{D}_{+}$ is a diagonal matrix with $\bm{D}_{+}(i, i) = \max\{ \bm{D}(i, i), 0 \}$. 

Note that $\widetilde{\bm{S}}_{j}$ found above is the maximum likelihood estimator of $\bm{S}_{j}$ given $\hat{\bm{S}}_{j}$ (over the set of positive semi-definite matrices) since the conditional distribution of $\hat{\bm{S}}_{j}$ given $\bm{S}_{j}$ is a normal distribution with mean $\bm{S}_{j}$.

Algorithm \ref{alg: MCMC-fixedS} is faster than Algorithm \ref{alg: MCMC-normalX}, since it avoids the step to update $\bm{S}_{j}$'s, which constitutes the main computational burden on Algorithm \ref{alg: MCMC-normalX}. However, Algorithm \ref{alg: MCMC-fixedS} can be made even faster by fixing $\sigma_{y}^{2}$ also. As a crude estimator, we used $\tilde{\sigma}_{y}^{2} = \| Y \| / 3$ throughout the experiments. We call the resulting algorithm \texttt{Bayes-fixedS-fast} and present it in Algorithm \ref{alg: Bayes-fixed S-fast}. Algorithm \ref{alg: Bayes-fixed S-fast} does nothing but calculate the moments of the posterior distribution of $\bm{\theta}$ given $\sigma_{y}^{2} = \tilde{\sigma}_{y}^{2} $, $\bm{S}_{j} = \widetilde{\bm{S}}_{j}$ and $\hat{\bm{z}}_{j}$ for $j = 1, \ldots, J$, and the prior parameters for $\bm{\theta}$.

\begin{algorithm}[t]
\caption{\texttt{MCMC-fixedS} - one iteration}
\label{alg: MCMC-fixedS}	
\KwIn{Current values of $\bm{\theta}$, $\sigma_y^2$; estimates $\widetilde{\bm{S}}_{1:J}$, observations $\hat{\bm{z}}_{1:J}$; noise variance $\sigma_{z}^{2}$, and hyperparameters $a$, $b$, $\bm{m}$, $\bm{C}$.}
\KwOut{New sample of $\sigma_y^{2},\bm{\theta}$.}
Use $\bm{S}_{1:J} = \widetilde{\bm{S}}_{1:J}$ throughout.

Sample $\bm{\theta}$ using \eqref{eq: theta full conditional}.

 Update $\sigma_y^2$ via an MH move targeting $p(\sigma_{y}^{2}|\bm{\theta},\bm{S}_{1:J}, \hat{\bm{z}}_{1:J})$.

\end{algorithm}

\begin{algorithm}[t] 
\caption{\texttt{Bayes-fixedS-fast}}
\label{alg: Bayes-fixed S-fast}
\KwIn{Observations $\hat{\bm{S}}_{1:J}$, $\hat{\bm{z}}_{1:J}$; noise variance: $\sigma_{z}^{2}$; estimate $\tilde{\sigma}_{y}^{2}$ of $\sigma_{y}^{2}$; hyperparameters: $\bm{m}$, $\bm{C}$.}
\KwOut{Estimate $\hat{\bm{\theta}}$.}
\For{$j = 1,2,\ldots J$}{
Calculate the estimate $\widetilde{\bm{S}}_{j}$ for $\bm{S}_{j}$ using $\hat{\bm{S}}_{j}$.

Calculate $\bm{U}_j = \widetilde{\bm{S}}_{j} (\tilde{\sigma}_{y}^{2} \widetilde{\bm{S}}_{j} +\sigma_{z}^2 \bm{I}_{d})^{-1} \widetilde{\bm{S}}_{j}$.

Calculate $\bm{u}_j = \widetilde{\bm{S}}_{j} ( \tilde{\sigma}_{y}^{2} \widetilde{\bm{S}}_{j} +\sigma_{z}^2 \bm{I}_{d})^{-1}\hat{\bm{z}}_j$.
}
\Return Posterior moments of $\bm{\theta}$: $\bm{\Sigma}_{\text{post}}^{-1} = \sum_{j = 1}^{J} \bm{U}_{j}  + \bm{C}^{-1}$, $\bm{m}_{\text{post}} = \bm{\Sigma}_{\text{post}} \left( \bm{C}^{-1} \bm{m} + \sum_{j = 1}^{J} \bm{u}_{j} \right)$.
\end{algorithm}

\subsection{Computational Cost} \label{sec: Computational cost}
All our methods described in this section require $\mathcal{O}(d^{3})$ computation (either per iteration for the iterative ones in Algorithms \ref{alg: MCMC-normalX} and \ref{alg: MCMC-fixedS} or as a whole for the fast version in Algorithm \ref{alg: Bayes-fixed S-fast}) since they deal with $d \times d$ matrices. In contrast, since \citet{Bernstein_and_Sheldon_2019} apply CLT to the vector $[\bm{S}, \bm{z}, \bm{y}^{T} \bm{y}]$, their methods deal with covariance matrices of size $(d^{2} + d + 1)$  \emph{explicitly}, which leads to $\mathcal{O}(d^{6})$ computation per MCMC iteration. For even moderate $d$, this computational difference between $\mathcal{O}(d^{6})$ and $\mathcal{O}(d^{3})$ becomes dramatic and the former may be prohibitive in practice. 

We also note that the complexity of our methods does not depend on the data size $n$. This is in contrast to the $\mathcal{O}(n)$ complexity of general-purpose methods applicable to linear regression, such as \citet[Section 4.3]{Alparslan_and_Yildirim_2022} and \citet{Ju_et_al_2022}.

\begin{figure*}[!ht]
    \centering
    \includegraphics[scale=0.5]{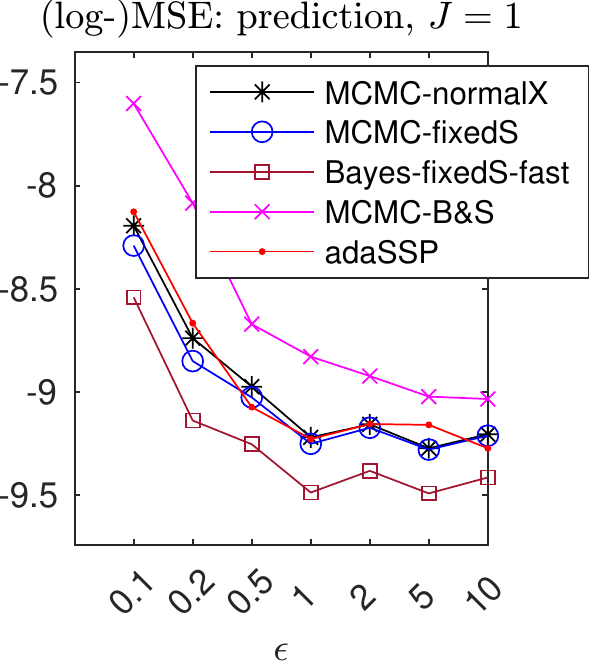} \includegraphics[scale = 0.5]{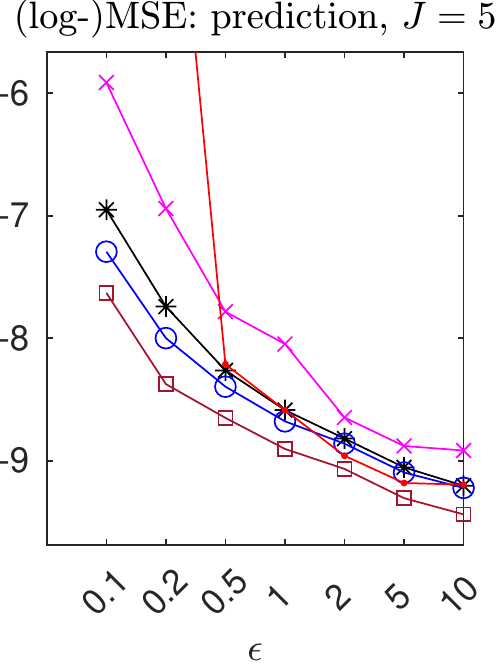} \includegraphics[scale=0.5]{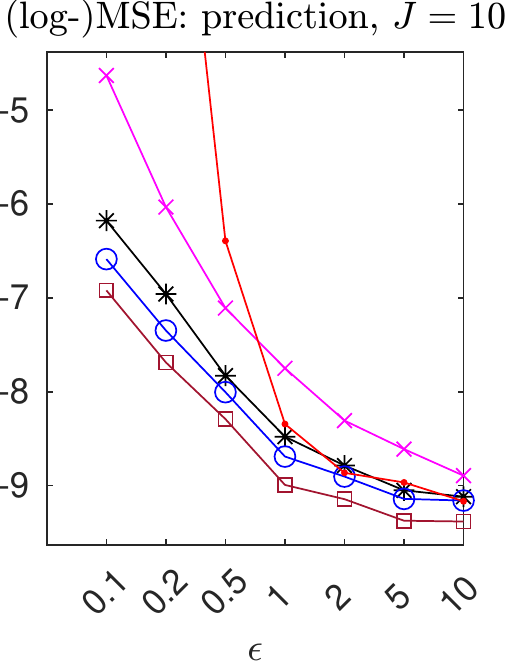} \includegraphics[scale=0.5]{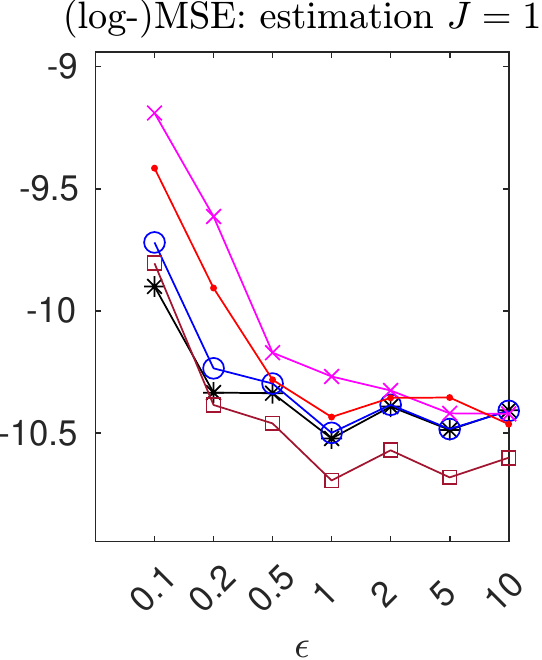} \includegraphics[scale=0.5]{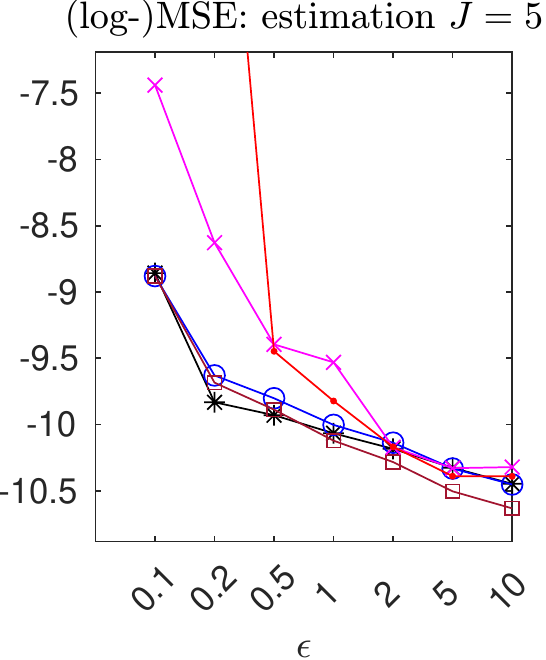} \includegraphics[scale=0.5]{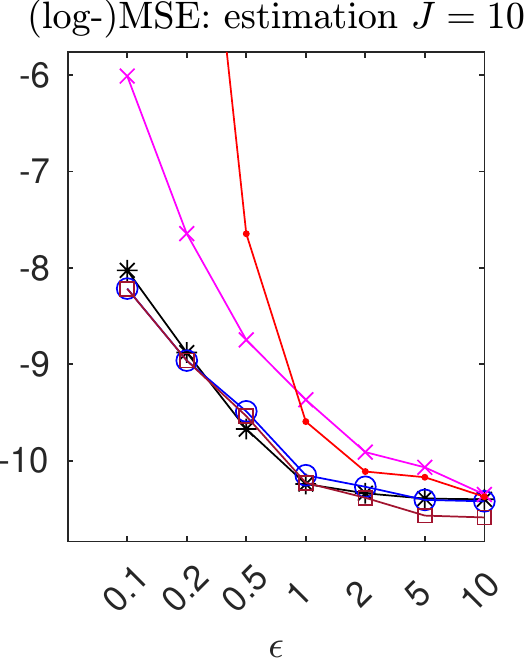} 
    \includegraphics[scale=0.5]{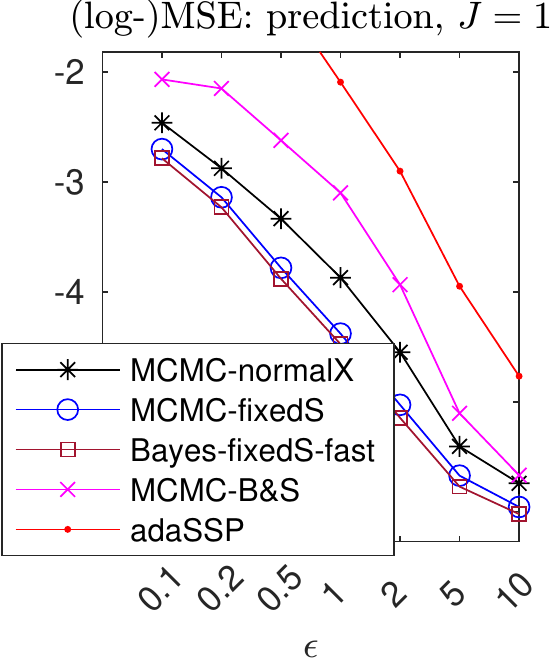} \includegraphics[scale = 0.5]{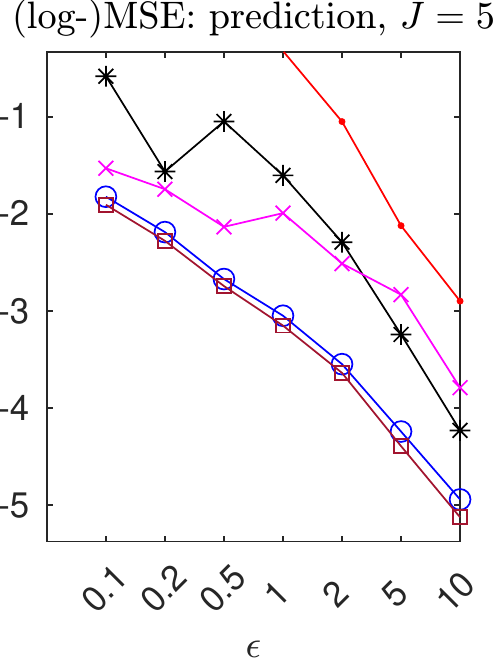} \includegraphics[scale=0.5]{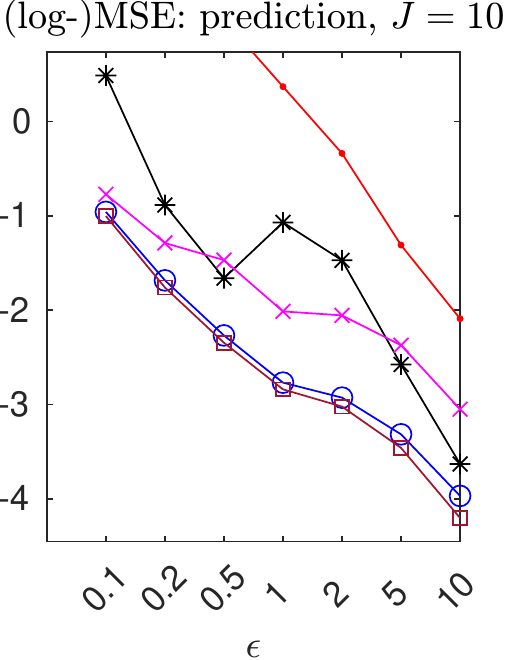} \includegraphics[scale=0.5]{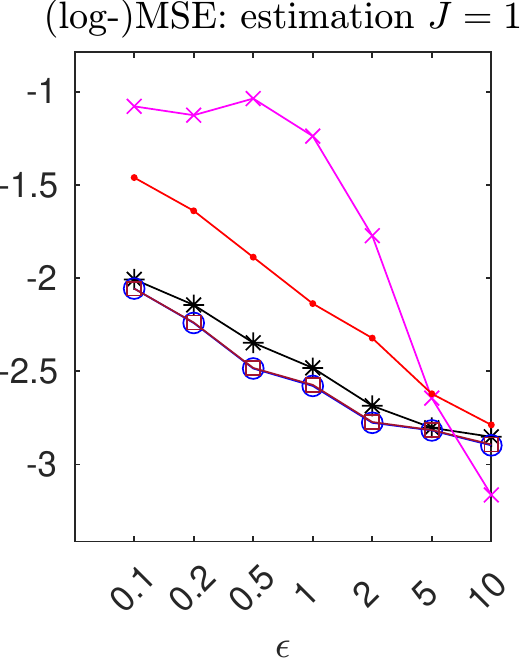} \includegraphics[scale=0.5]{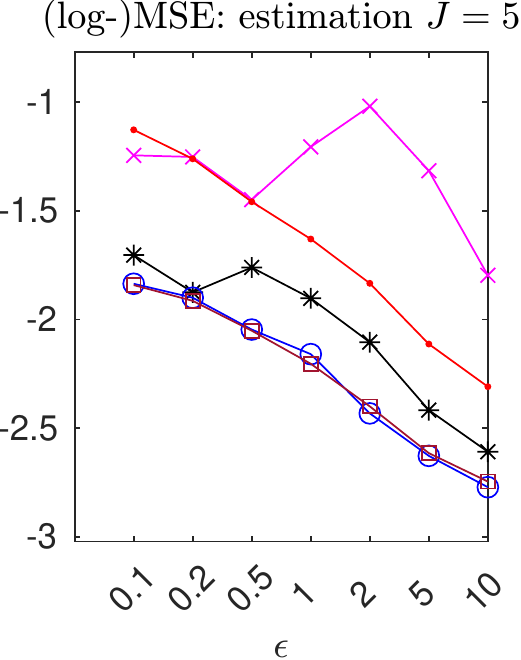} \includegraphics[scale=0.5]{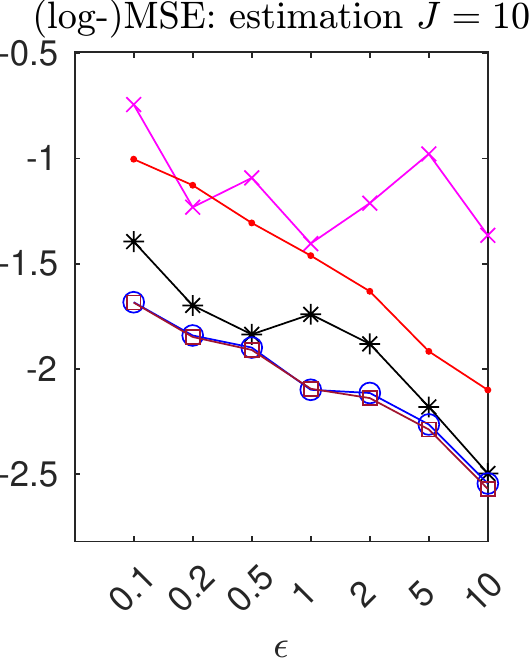}
    \caption{Averaged prediction and estimation performances (over 50 runs). Top row: $n = 10^{5}, d =2$, Bottom row: $n = 10^{5}, d =5$. For the non-private posterior, for $d = 2$ ($d = 5$, resp.), the obtained MSE values for estimation are  $2.92 \times 10^{-5}$ ($1.54\times 10^{-5}$, resp.) and for prediction are $9.60\times 10^{-5}$ ($4.19\times 10^{-5}$, resp.)}
    \label{fig: Est and Pred results}
\end{figure*}

\subsection{Extensions} \label{sec: Extensions}
We mention two other variants of our methodology, deferring the details to Appendix \ref{sec: Details of the other variants}. One solution for dealing with non-normal $P_{\bm{x}}$ could be to average the feature vectors in $\bm{X}$ and the corresponding response variables in $\bm{y}$, so that the averaged rows of $\bm{X}$ can be modelled as approximately normal, due to CLT. This enables using the methods devised for normally distributed features. For the details of this approach, see Appendix \ref{sec: Approximating normality by averaging}.

Secondly, if the features are normally distributed but not centred, we need to include the intercept parameter, which corresponds to appending $\bm{x}_{i}$ with a one from the left, and \texttt{MCMC-normalX} does not directly apply. In that case, we can modify the hierarchical model that accommodates the non-centralised features and the intercept parameter, and we still benefit from the sampling techniques involved in \texttt{MCMC-normalX} in Algorithm \ref{alg: MCMC-normalX}. Appendix \ref{sec: Including the Intercept} contains the details of the modified hierarchical model.

\begin{table*}[!ht]
\small
\caption{Averaged prediction performances (over 50 runs) for the real datasets - $\epsilon=1$} \label{table: table_all_results_prediction_eps_1}
\centerline{
\begin{tabular}{c|c|c|c|c|c|c}
    \toprule
   $J$ & \textbf{data sets} & \texttt{MCMC-normalX} & \texttt{MCMC-fixedS} & \texttt{Bayes-fixedS-fast}  & \texttt{MCMC-B\&S} & \texttt{adaSSP}\\
    \midrule
    \multirow{5}{*}{$J=1$} 
     & PowerPlant  & 0.0129 & 0.0129 & 0.0129 & \textbf{0.0128} & 0.0139  \\
     & BikeSharing  & 0.0024 & 0.0021 & 0.0021  & \textbf{0.0020} & 0.0107 \\
     & AirQuality  & 0.0060 &  \textbf{0.0057}  & \textbf{0.0057} & 0.0062 & 0.0066 \\
     & 3droad & 0.0229 & 0.0229 & 0.0229 & 0.0229 & 0.0229\\
    \midrule
    \multirow{5}{*}{$J=5$} 
     & PowerPlant  & \textbf{0.0133} & 0.0134 & 0.0134 &  0.0136 & 0.0235\\
     & BikeSharing  & 0.0174  & \textbf{0.0045} & \textbf{0.0045} & 0.0086 & 0.0382   \\
     & AirQuality  & 0.0142 & 0.0100 & \textbf{0.0099} & 0.0130 & 0.0227 \\
     & 3droad & 0.0229 & 0.0229 & 0.0229 &  0.0229 & 0.0229 \\
     \midrule
    \multirow{5}{*}{$J=10$} 
     & PowerPlant  & \textbf{0.0142} & 0.0143  & 0.0143  & 0.0143 & 0.0351 \\
     & BikeSharing  & 0.0812 & \textbf{0.0082} & \textbf{0.0082} & 0.0137 & 0.0526   \\
     & AirQuality  & 0.0985  & \textbf{0.0117} & \textbf{0.0117}  & 0.0216 & 0.0314\\
     & 3droad  & 0.0229  & 0.0229  & 0.0229 & 0.0229 & 0.0229 \\
     \bottomrule
\end{tabular}
}
\end{table*}

\section{Numerical Experiments} \label{sec: Numerical experiments}
We present several numerical evaluations of the proposed methods, \texttt{MCMC-normalX}, \texttt{MCMC-fixedS}, and \texttt{Bayes-fixedS-fast}, with simulated and real data. We compare our algorithms with two methods: \texttt{adaSSP} of \citet{Wang_2018_RevisitingDP} and the MCMC method of \citet{Bernstein_and_Sheldon_2019} for differentially private linear regression, which we will call \texttt{MCMC-B\&S}. Note that \texttt{adaSSP} and \texttt{MCMC-B\&S} were originally proposed for the non-distributed setting, that is, $J = 1$. For a comprehensive comparison, we implemented their extensions for $J \geq 1$. The details of those extensions are provided in Appendix \ref{sec: Details of the compared methods}. In particular, we carefully generalised the model in \citet{Bernstein_and_Sheldon_2019} for $J \geq 1$ (and for $(\epsilon, \delta)$-DP) in a similar fashion as we did to our model in Section \ref{sec: Distributed setting}. \texttt{MCMC-B\&S} is the adaptation of \citet[Algorithm 1]{Bernstein_and_Sheldon_2019} for this generalised model. Both for simulated and real data, we set $\|X\|$ and $\|Y\|$ to the maximum of the norms over the whole dataset (for real data, we perform centring and normalising first.) This procedure is equivalent to scaling the data to an interval and standard in existing work. The link for the code and data to replicate all of the experiments in this section is given at the end of Section \ref{sec: Conclusion}.

\subsection{Experiments with Simulated Data} \label{sec: Experiments with simulated data}
We considered two different configurations, $(n = 10^{5}, d = 2)$ and $(n = 10^{5}, d = 5)$, for the problem size. For each $(n, d)$ we simulated the data as follows. We generated $\bm{\theta} \sim \mathcal{N}(\bm{0}, \bm{I}_{d})$, $\bm{x}_{i} \sim \mathcal{N}(\bm{0}, \bm{\Sigma}_{x})$ where $\bm{\Sigma}_{x} \sim \mathcal{IW}(\bm{\Lambda}, \kappa)$, with $\kappa = d+1$ and the scale matrix selected randomly as $\bm{\Lambda} = \bm{V}^{T} \bm{V}$, where $\bm{V}$ is a $d \times d$ matrix of i.i.d.\ variables from $\mathcal{N}(0, 1)$. The response variables $\bm{y}$ are generated with $\sigma_{y}^{2} = 1$. For inference, we used the same $\bm{\Lambda}$, $\kappa$ as above and $a = 20$, $b = 0.5$, $\bm{m} = \bm{0}_{d \times 1}$, $\bm{C} = b/(a-1) \bm{I}_{d}$.

We evaluated the methods at all combinations of $J \in \{1, 5, 10\}$ and $\epsilon \in \{0.1, 0.2, 0.5, 1, 2, 5, 10\}$. All the MCMC algorithms were run for $10^{4}$ iterations. For each $(J, \epsilon)$ pair, we ran each method for $50$ times (each with different noisy observations) to obtain average performances.

For performance metrics, we looked at the mean squared errors (MSE) of (i) the estimates $\hat{\bm{\theta}}$ and (ii) the predictions $\hat{y}(\bm{x}_{\text{test}})$ generated by the methods. For the Bayesian methods, $\hat{\bm{\theta}}$ is taken as the posterior mean, which can be numerically estimated for the MCMC algorithms. For prediction performance, we calculated $\mathbb{E} [\hat{y}(\bm{x}_{\text{test}}) - y_{\text{test}} ]^{2}$. For the Bayesian methods, $\hat{y}(\bm{x}_{\text{test}})$ is the posterior predictive expectation of $y_{\text{test}}$ at $\bm{x}_{\text{test}}$. For \texttt{adaSSP}, we simply take $\hat{y}(\bm{x}_{\text{test}}) = \bm{x}_{\text{test}}^{T} \hat{\bm{\theta}}$. 

The results are summarised in Figure \ref{fig: Est and Pred results}. We observe that \texttt{MCMC-fixedS} and \texttt{Bayes-fixedS-fast} outperform \texttt{adaSSP} and \texttt{MCMC-B\&S} in almost all cases both in terms of estimation and prediction. Comparing the full-scale algorithms \texttt{MCMC-normalX} and \texttt{MCMC-B\&S} (that involve updates of $\bm{S}$), we observe a clear advantage of \texttt{MCMC-normalX} at $d = 2$ but \texttt{MCMC-B\&S} becomes more competitive at $d = 5$. This can be attributed to the fact that \texttt{MCMC-B\&S} requires the extra statistic $\bm{y}^{T} \bm{y}$, unlike \texttt{MCMC-normalX}, which causes \texttt{MCMC-B\&S} to use more noisy statistics. This difference becomes more significant at small $d$, where the relative effect of the presence of $\bm{y}^{T} \bm{y}$ on the sensitivity is more significant. Finally, all methods improve as $\epsilon$ grows, as expected.

We also compare the computation times of the MCMC algorithms \texttt{MCMC-normalX}, \texttt{MCMC-fixedS}, and \texttt{MCMC-B\&S}\footnote{The algorithms were run in MATLAB 2021b on an Apple M1 chip with 8 cores and 16 GB LPDDR4 memory.}. Figure \ref{fig: run_times} shows the run times of the algorithms vs. $d$. The drastic difference in computational loads explained in Section \ref{sec: Computational cost} is also visible in the figure. While \texttt{MCMC-B\&S} may be improved in terms of accuracy as $d$ increases, the $\mathcal{O}(d^{6})$ dramatically slows it down.
\begin{figure}[!h]
    \centering
    \includegraphics[scale=0.5]{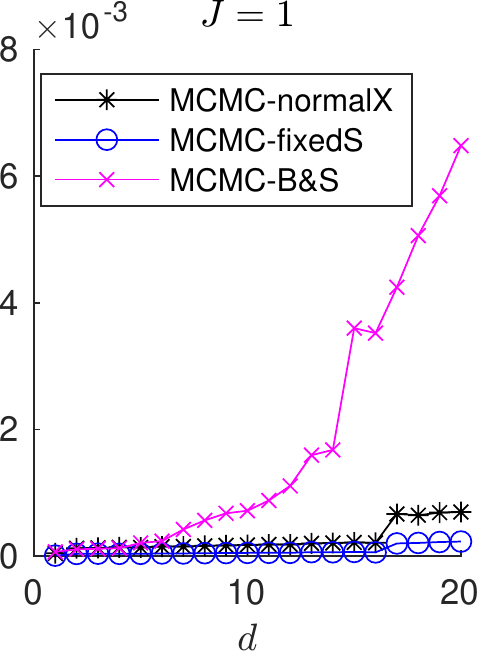}
    \includegraphics[scale=0.5]{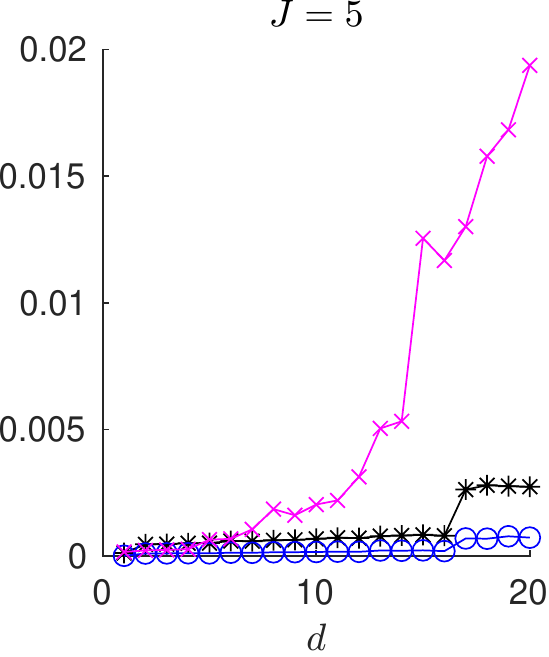}
    \includegraphics[scale=0.5]{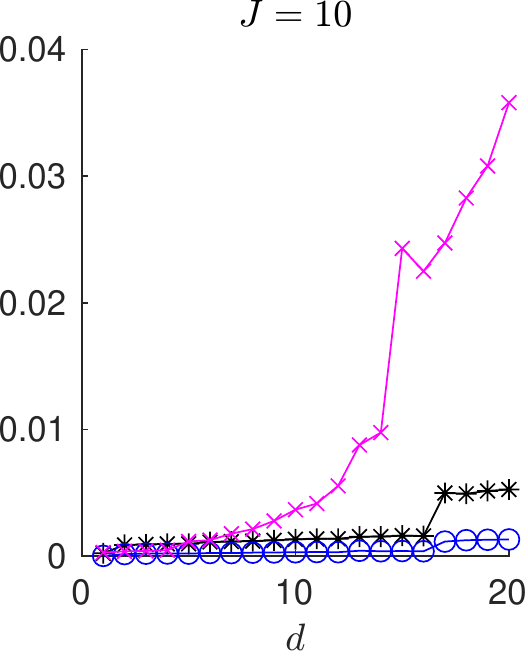}
    \caption{Run times per iteration for MCMC algorithms}
    \label{fig: run_times}
\end{figure}

In addition to the MSE, we also consider maximum mean discrepancy (MMD) for evaluating the calibration of the learned posteriors following the method in \cite{Gretton_et_al_2012}. The estimated MMD values in Figure \ref{fig: MMD plots} in Appendix \ref{sec: Additional results} show that all the methods converge to the non-private posterior as $\epsilon$ increases.

\subsection{Experiments with Real Data} \label{sec: Experiments with real data}
For the real data case, we use four different data sets from the UCI Machine Learning Repository. We disregard the columns including string data or key values (ID, name, date, etc.), and we consider the most right-hand column as $\bm{y}$. The finalised data sets are summarised below.
\centerline{
\small
\begin{tabular}{c|c|c|c}
\toprule
\textbf{data set} & $n$ & $d$ & \textbf{hyperlinks}\\
\midrule
power plant energy & 7655 & 4 &\href{https://archive.ics.uci.edu/ml/datasets/Combined+Cycle+Power+Plant}{view link}\\
bike sharing &  13904  & 14 & \href{https://archive.ics.uci.edu/ml/datasets/Bike+Sharing+Dataset}{view link}\\
air quality & 7486 & 12 &\href{https://archive.ics.uci.edu/ml/datasets/Air+Quality}{view link}\\
3d road   & 347900 & 3 & \href{https://archive.ics.uci.edu/ml/datasets/3D+Road+Network+\%28North+Jutland\%2C+Denmark\%29}{view link}\\
\bottomrule
\end{tabular}
}

For prediction, we took $80 \%$ of the data for training and the rest for testing. We present the average prediction performances (out of 50 runs) in Table \ref{table: table_all_results_prediction_eps_1} for each dataset and $J$ with $\epsilon = 1$.  We observe that the prediction performances of the compared methods are close, while \texttt{MCMC-fixed-S} and \texttt{Bayes-fixed-S} are arguably the most stable ones. When $J > 1$, (the distributed data setting) those two methods beat \texttt{adaSSP} and \texttt{MCMC-B\&S} more satisfactorily. To show the robustness of these conclusions to variation, we provide 90\% confidence intervals for the prediction performances in Table \ref{table: prediction CIs} in Appendix \ref{sec: Additional results}.

\section{Conclusion} \label{sec: Conclusion}
We propose a novel Bayesian inference framework, with MCMC being its main workhorse, for a differentially private distributed linear regression setting where the data is partitioned among the data holders and shared using summary statistics. We provide several Bayesian inference algorithms suited to the developed hierarchical model for linear regression. Those algorithms can be preferred one over the other depending on the computational budget, model specifics, or how much we know about the underlying statistical facts of the data. We exploit the conditional structure between the summary statistics of linear regression (Proposition \ref{prop: cond dist of Z given S}), which leads to feasible algorithms with computational advantages over their competitors. The numerical experiments show that the proposed methods are competitive with their state-of-the-art alternatives in terms of accuracy. The extensions mentioned in Section \ref{sec: Extensions} indicate potential future directions. 

The experiments demonstrate that \texttt{MCMCM-fixedS} and \texttt{Bayes-fixedS-fast} are competitive even under normality. Hence, we can suggest users try those fast and effective versions on their first attempts. However, \texttt{MCMC-normalX} can potentially provide more insight since it infers $P_{\bm{x}}$ as well. There is also room for improvement of \texttt{MCMC-normalX}. We chose the most common MH moves to update $\sigma_{y}^{2}$ and $\bm{S}_{j}$'s, without paying much attention to their efficiencies. Especially for large $d$, more advanced techniques such as Hamiltonian Monte Carlo or pseudo-marginal MCMC may be employed to facilitate the mixing of the algorithm. 

\paragraph{Code for the experiments:}
Link to the code and the data for the experiments: \url{https://github.com/sinanyildirim/Bayesian_DP_dist_LR.git}.


\section*{Acknowledgements}
The study was funded by the Scientific and Technological Research Council of Turkey (TÜBİTAK) ARDEB Grant No 120E534.




\bibliographystyle{icml2023}
\bibliography{my_refs}

\newpage
\appendix
\onecolumn

\section{Derivations for \texttt{MCMC-normalX}} \label{sec: Derivations for MCMC-normalX}
We reserve this section for the derivations for our algorithm \texttt{MCMC-normalX}.
\subsection{Full Conditional Distribution of $\bm{\Sigma}_{x}$}
We can write the full conditional distribution for $\bm{\Sigma}_{x}$ as
\begin{align}
	p(\bm{\Sigma}_{x}|\bm{\bm{S}_{1:J}},\hat{\bm{S}}_{1:J},\bm{\hat{\bm{z}}_{1:J}}) &\propto p(\bm{\Sigma}_{x})\prod_{j=1}^{J}p(\bm{S}_{j}| \bm{\Sigma}_{x})\nonumber\\
	&= \frac{|\bm{\Lambda}|^{dk/2}}{2^{dk/2}\bm{\Gamma}_{d}(\frac{k}{2})}|\bm{\Sigma}_{x}|^{-(d+\bm{\kappa}+1)/2}e^{-\frac{1}{2} \text{tr}(\bm{\Lambda}\bm{\Sigma}_{x}^{-1})}\nonumber
	\prod_{j=1}^{J}\frac{|\bm{S}_{j}|^{(n_j-d-1)/2}e^{-\frac{1}{2} \text{tr}(\bm{\Sigma}_{x}^{-1}\bm{S}_{j})}}{2^{n_jd/2}|\bm{\Sigma}_{x}|^{n_j/2}\bm{\Gamma}_{d}(n_j/2)}\nonumber\\
	&\propto |\bm{\Sigma}_{x}|^{-\frac{n}{2}-\frac{(d+\bm{\kappa}+1)}{2}} e^{-\frac{1}{2}(\sum \text{tr}(\bm{\Sigma}_{x}^{-1} \bm{S}_{j})+\text{tr}(\bm{\Lambda} \bm{\Sigma}_{x}^{-1}))} \nonumber\\
	&\propto |\bm{\Sigma}_{x}|^{-\frac{(d+\bm{\kappa}+n+1)}{2}} e^{-\frac{1}{2}\text{tr}((\sum \bm{\bm{S}_{j}} + \bm{\Lambda})\bm{\Sigma}_{x}^{-1})}.\nonumber
\end{align}
Therefore, we have
\begin{align*}
\bm{\Sigma}_{x} | \bm{S}_{1:J},\bm{\hat{\bm{S}}_{1:J}},\bm{\hat{z}}_{1:J} \sim \mathcal{IW}\left(\bm{\Lambda}+\sum_{j=1}^J \bm{S}_{j}, \bm{\kappa} + n \right).
\end{align*}
\subsection{Full Conditional Distribution of $\bm{\theta}$}
The full conditional distribution for $\theta$ can be written as
\begin{align*}
p(\bm{\theta}|\bm{S}_{1:J},\bm{\sigma}_y^2,\bm{\hat{z}}_{1:J}) &\propto \mathcal{N}(\bm{\theta}; \bm{m}, \bm{C}) p(\bm{\hat{z}}_{1:J}|\bm{S}_{1:J}, \bm{\theta}, \bm{\Sigma}_y^2). 
\end{align*}
For the second factor, we have
\begin{align*}
p(\bm{\hat{z}_{1:J}}|\bm{\bm{S}}_{1:J},\bm{\theta},\bm{\sigma}_y^2) &\propto \prod_{i=1}^{J}p(\hat{\bm{z}}_j|\bm{S}_j,\bm{\theta},\sigma_y^2) = \prod_{i=1}^{J}p(\hat{\bm{z}}_{j}; \bm{S}_j\bm{\theta}, \sigma_y^2\bm{S}_j + \sigma_{z}^{2} \bm{I}_{d})\\
&\propto \prod_{i=1}^{J} \exp \left\{-\frac{1}{2}(\hat{\bm{z}}_j-\bm{S}_j\bm{\theta})^T(\sigma_y^2\bm{S}_j + \sigma_{z}^{2} \bm{I}_{d})^{-1}(\hat{\bm{z}}_j-\bm{S}_j\bm{\theta}) \right\} \\
& \propto \exp \left\{-\frac{1}{2} \left[\bm{\theta}^{T} \left(\sum_j \bm{S}_j(\sigma_y^2 \bm{S}_j + \sigma_{z}^{2} \bm{I}_{d})^{-1}\bm{S}_j \right)\bm{\theta}-2\bm{\theta}^{T} \left(\sum_j \bm{S}_j (\sigma_y^2\bm{S}_j + \sigma_z^2 \bm{I}_{d})^{-1} \right)\bm{\hat{z}}_j \right] \right\}.
\end{align*}
Reorganising the terms, we end up with
\begin{align*}
p(\bm{\theta}|\bm{S}_{1:J},\bm{\sigma}_y^2,\bm{\hat{z}}_{1:J}) \propto \exp \left\{-\frac{1}{2} \left[ \bm{\theta}^{T} \bm{\Sigma}_{p}^{-1} \bm{\theta} -2\bm{\theta}^T \bm{\Sigma}_{p}^{-1} \bm{m}_{p} \right] \right\},
\end{align*}
where
$\bm{\Sigma}_{p} = [\sum_j \bm{S}_j(\sigma_{y}^2 \bm{S}_j + \sigma_{z}^{2} \bm{I}_{d})^{-1} \bm{S}_j + \bm{C}^{-1}]^{-1}$ and $m_p = \bm{\Sigma}_p[\sum_j \bm{S}_j (\sigma_{y}^{2} \bm{S}_{j} + \sigma_{z}^{2} \bm{I}_{d})^{-1})\hat{\bm{z}}_{j}+ \bm{C}^{-1} \bm{m}]$.
Therefore, $\bm{\theta} | \bm{S}_{1:J},\bm{\sigma}_y^2,\bm{\hat{z}}_{1:J} \sim \mathcal{N}(\bm{\theta}; \bm{m}_{p},\bm{\Sigma}_p)$.

\subsection{Acceptance Ratio for the MH Update of $\bm{S}_{j}$}
We drop the index $j$ from $\bm{S}_{j}$ for simplicity. When $\bm{S}' \sim \mathcal{W}(\bm{S}/\alpha, \alpha)$, the proposal density is
\begin{align*}
q(\bm{S}'|\bm{S}) &= \frac{|\bm{S}'|^{(\alpha-d-1)/2}e^{-\text{tr}[\alpha \bm{S}^{-1}\bm{S}']/2}}{|\bm{S}/\alpha|^{\alpha/2}2^{\alpha d/2}\Gamma_{d}(\frac{\alpha}{2})} = \frac{|\bm{S}'|^{(\alpha-d-1)/2}e^{-\text{tr}[\alpha \bm{S}^{-1}\bm{S}']/2}}{|\bm{S}|^{\alpha/2}2^{\alpha d/2}\Gamma_{d}(\frac{\alpha}{2})}\alpha^{\alpha/2}.
\end{align*}
Therefore, the acceptance ratio corresponding to this proposal is
\[
\min \left\{ 1, \frac{q(\bm{S}|\bm{S}')}{q(\bm{S}'|\bm{S})} \frac{\mathcal{W}(\bm{S}'; n_{j} \bm{\Sigma}_{x}, \kappa) p(\hat{\bm{S}} | \hat{\bm{S}}') \mathcal{N}(\hat{\bm{z}}; \bm{S}' \bm{\theta}, \sigma_{y}^{2} \bm{S} \bm{\theta} + \sigma_{z}^{2} \bm{I}_{d}) }{\mathcal{W}(\bm{S}; n_{j} \bm{\Sigma}_{x}, \kappa) p(\hat{\bm{S}} | \hat{\bm{S}}) \mathcal{N}(\hat{\bm{z}}; \bm{S} \bm{\theta}, \sigma_{y}^{2} \bm{S} \bm{\theta} + \sigma_{z}^{2} \bm{I}_{d})} \right\},
\]
where the ratio of proposals becomes
\begin{align}
\frac{q(\bm{S}|\bm{S}')}{q(\bm{S}'|\bm{S})} &= \frac{|\bm{S}|^{(\alpha - d-1)/2}|\bm{S}|^{\alpha/2}e^{-\text{tr}[\alpha \bm{S}'^{-1}\bm{S}]/2}}{|\bm{S}'|^{(\alpha - d-1)/2}|\bm{S}'|^{\alpha/2}e^{-\text{tr}[\alpha \bm{S}^{-1}\bm{S}']/2}} =\left(\frac{|\bm{S}|}{|\bm{S}'|} \right)^{\alpha -(d+1)/2}e^{\alpha(\text{tr}[\bm{S}^{-1}\bm{S}']-\text{tr}[\bm{S}'^{-1}\bm{S}])/2}.\nonumber
\end{align}

\subsection{Acceptance Ratio for the MH Update of $\sigma_{y}^{2}$}
To update $\sigma_{y}^{2}$, we use a random walk proposal $\sigma_{y}^{2\prime} \sim \mathcal{N}(\sigma_{y}^{2}, \sigma_{q}^{2})$. The resulting acceptance ratio is
\[
\min \left\{ 1, \frac{\mathcal{IG}(\sigma_{y}^{2\prime}; a, b) \prod_{j = 1}^{J}  \mathcal{N}(\hat{\bm{z}}_{j}; \bm{S}_{j} \bm{\theta}, \sigma_{y}^{2 \prime} \bm{S}_{j} \bm{\theta} + \sigma_{z}^{2} \bm{I}_{d})}{\mathcal{IG}(\sigma_{y}^{2}; a, b) \prod_{j = 1}^{J} \mathcal{N}(\hat{\bm{z}}_{j}; \bm{S}_{j} \bm{\theta}, \sigma_{y}^{2} \bm{S}_{j} \bm{\theta} + \sigma_{z}^{2} \bm{I}_{d}) } \right\}.
\]

\section{Other Variants} \label{sec: Details of the other variants}

This appendix is reserved for the details of the other variants mentioned in Section \ref{sec: Extensions}.

\subsection{Approximating Normality by Averaging} \label{sec: Approximating normality by averaging}
Assume $J = 1$ for simplicity. When the feature vectors $\bm{x}_{i}$, $i=1, \dots,n$ are not normal, another approach that we consider is based on modifying the data to make the rows of the feature matrix averages of multiple original features, and therefore, approximately normal, by the CLT. Specifically, let $k > 1$ be an integer that divides $n$ so that $m = n/k$ is also an integer. Consider the $m \times n$ matrix
\[
\bm{A} = \frac{1}{\sqrt{k}} \begin{bmatrix} \bm{1}_{1 \times k} & \bm{0}_{1 \times k} & \ldots & \bm{0}_{1 \times k} \\
\bm{0}_{1 \times k} & \bm{1}_{1 \times k} & \ldots & \bm{0}_{1 \times k} \\
\vdots & \vdots & \ddots & \vdots \\
\bm{0}_{1 \times k} & \bm{0}_{1 \times k} & \ldots & \bm{1}_{1 \times k} 
\end{bmatrix}_{m \times n},
\]
where $k$ is some number that divides $n$ so that $m = n/k$ is an integer. Then, the matrix $\bm{X}_{\text{av}} = \bm{A} \bm{X}$ corresponds to constructing a shorter $m \times d$ matrix whose $i$'th column is the average of the rows $(i-1)k + 1, \ldots, i k$ of $\bm{X}$ (scaled by $1/\sqrt{k}$). When $k$ is large enough, we can make normality assumptions for the rows of $\bm{X}_{\text{av}}$. Further, consider
\[
\bm{y}_{\text{av}} := \bm{A} \bm{y} = \bm{X}_{\text{av}} \bm{\theta} + \bm{A} \bm{e},
\]
whose mean is $\bm{X}_{\text{av}} \bm{\theta}$ and covariance $\bm{A} \bm{A}^{T} \sigma_{y}^{2}$. But we have $\bm{A} \bm{A}^{T} = \bm{I}_{m}$, so the covariance is $\sigma_{y}^{2} \bm{I}_{m}$. Therefore, the same hierarchical model in Figure \ref{fig: dplr_model} can be used for $\bm{X}_{\text{av}}$, $\bm{y}_{\text{av}}$ with their respective summary statistics
\[
\bm{z}_{\text{av}} = (\bm{X}_{\text{av}})^{T} \bm{y}_{\text{av}}, \quad \bm{S}_{\text{av}} = (\bm{X}_{\text{av}})^{T} \bm{X}_{\text{av}},
\]
as well as the noisy versions of those summary statistics to provide a given level of privacy. Note that $\bm{S}_{\text{av}}$ and $\bm{z}_{\text{av}}$ have the same sensitivities as $\bm{S}$ and $\bm{z}$, hence the same noise variances are needed for privacy. However, $\bm{S}_{\text{av}}$ and $\bm{z}_{\text{av}}$ bear less information about $\bm{\theta}$ than $\bm{S}$ and $\bm{z}$ due to averaging.

\subsection{Including the Intercept} \label{sec: Including the Intercept}
Again, assume $J = 1$ for simplicity. If we include the intercept parameter, which corresponds to appending $\bm{x}_{i}$ with a $1$ from the left, the design matrix will be changed from $\bm{S}$ to $
\bm{S}_{0} = \begin{bmatrix} n &n \bar{\bm{x}}^{T} \\  n \bar{\bm{x}} & \bm{S}  \end{bmatrix}$, 
where $\bar{\bm{x}} = \frac{1}{n} \sum_{i = 1}^{n} \bm{x}_{i}$. Also, note that $\bm{S} = (n-1) \widehat{\bm{\Sigma}}_{x} + n \bar{\bm{x}} \bar{\bm{x}}^{T}$ where $\widehat{\bm{\Sigma}}_{x}$ is the sample covariance of $\bm{x}_{1}, \ldots, \bm{x}_{n}$. Under the normality assumption for $\bm{x}_{i}$'s, we have $\bar{\bm{x}} \sim \mathcal{N}(\bm{m}, \bm{\Sigma}_{x}/n)$ and $\widehat{\bm{\Sigma}}_{x} \sim \mathcal{W}(n-1, \bm{\Sigma}_{x})$ are independent and have known distributions. Therefore, we can write a model that includes the additional variables $(\bm{b} = \bar{\bm{x}}, \widehat{\bm{\Sigma}}_{x}, \bm{S}_{0})$ such that $\bm{b}$ and $\widehat{\bm{\Sigma}}_{x}$ are independent and have known distributions and
\[
\bm{S}_{0} = \begin{bmatrix} n &n \bm{b}^{T} \\  n \bm{b} & (n-1) \widehat{\bm{\Sigma}} + n \bm{b} \bm{b}^{T} \end{bmatrix}
\] 
replaces $\bm{S}$ in the standard model. More specifically, we have the following hierarchical model:
\begin{align}
\bm{\theta} \sim \mathcal{N}(\bm{m}, \bm{C}), \quad \bm{\Sigma}_{x} \sim \mathcal{IW}(\bm{\Lambda}, \kappa) \nonumber, \quad \widehat{\bm{\Sigma}}_{x} | \bm{\Sigma}_{x} \sim \mathcal{W}(n-1, \bm{\Sigma}_{x}), \quad \bm{b} | \bm{\Sigma}_{x} \sim \mathcal{N}(\bm{\mu}, \bm{\Sigma}_{x}/n),\nonumber\\
\bm{z} | \bm{\theta},\sigma_{y}^{2}, \widehat{\bm{\Sigma}}_{x}, \bm{b} \sim \mathcal{N}(\bm{S}_{0} \bm{\theta}, \bm{S}_{0} \sigma_{y}^{2}), \quad
\hat{\bm{S}}| \widehat{\bm{\Sigma}}_{x}, \bm{b} = \mathcal{N}(\bm{S}_{0}, \sigma^{2}_{s, \epsilon} \bm{I}_{d+1}), \quad
\bm{\hat{z}}| \bm{z} = \mathcal{N}(\bm{z}, \sigma^{2}_{z} \bm{I}_{d+1}).\nonumber
\end{align}

\section{Compared Methods}  \label{sec: Details of the compared methods}

Here, we provide the details of the methods which we compare with the proposed methods in this paper. Those methods are originally proposed for $J = 1$. However, for comparison, we implemented their natural extensions to the general (distributed) case $J \geq 1$. The implementations of those methods can be found in the code package provided for this paper.

\subsection{MCMC of \citet{Bernstein_and_Sheldon_2019} Adapted to the Distributed Setting}
In \citet{Bernstein_and_Sheldon_2019}, $J = 1$ is considered only and the vector $\bm{ss} = [\text{vec}(\bm{S}), \bm{z} = \bm{X}^{T} \bm{y}, u = \bm{y}^{T} \bm{y}]$ is perturbed with privacy-preserving noise to generate the observations of the model. For $J \geq 1$, we consider the following natural extension for generating perturbed observations $\widehat{\bm{ss}} = [\text{vec}(\hat{\bm{S}}_{j}), \hat{\bm{z}}_{j}, \hat{u}_{j}]$, where
\begin{align}
\hat{\bm{S}}_{j} = \bm{S}_{j} + \sigma_{dp} \bm{M}_{j},
 \quad \hat{\bm{z}}_{j} = \bm{z}_{j} + \bm{v}_{j}, \quad \bm{v}_{j} \sim \mathcal{N}(\bm{0}, \sigma_{dp}^{2} \bm{I}_{d}) , \quad \hat{u}_{j} = u_{j} + w_{j}, \quad w_{j} \sim \mathcal{N}(0, \sigma_{dp}^{2}),
\end{align}
where $\sigma_{dp} = \sigma(\epsilon, \delta) \Delta_{\bm{ss}}$ with  $\Delta_{\bm{ss}} = \sqrt{\| X \|^{4} + \| X \|^{2} \| Y \|^{2} + \| Y \|^{4}}$. 

For completeness, we provide the further specifics of the model: We take $(\bm{\theta}, \sigma_{y}^{2}) \sim \mathcal{NIG}(a_{0}, b_{0}, \bm{m}, \bm{C})$ and $P_{\bm{x}} = \mathcal{N}(\bm{0}, \bm{\Sigma}_{x})$, where $\bm{\Sigma}_{x} \sim \mathcal{IW}(\bm{\Lambda}, \kappa)$. 

During the comparisons, we set the $a_{0}, b_{0}, \bm{m}, \bm{C}, \bm{\Lambda}, \kappa$ to the same values for both this model and our proposed model that assumes normally distributed features, \textit{i.e.}, $P_{\bm{x}} = \mathcal{N}(\bm{0}, \bm{\Sigma}_{x})$. Then, we apply an extension of the method of \citet{Bernstein_and_Sheldon_2019} suited to those observations. One iteration of that algorithm includes the following steps in order:
\begin{itemize}
\item Calculate the $D \times 1$ mean vector and $D \times D$ covariance matrix
\[
\bm{\mu}_{\text{prior}}(\bm{\theta}) = \mathbb{E}_{\bm{\theta}}[\bm{ss}], \quad  \bm{\Sigma}_{\text{prior}}(\bm{\theta}) = \text{Cov}_{\bm{\theta}}[\bm{ss}]
\]
This step requires the fourth moments $\mathcal{N}(\bm{0}, \bm{\Sigma}_{x})$. 
\item For $j = 1, \ldots, J$, sample $\bm{ss}_{j} \sim \mathcal{N}(\bm{\mu}_{\text{post}}^{(j)}, \bm{\Sigma}_{\text{post}}^{(j)})$ with
\[
\bm{\Sigma}_{\text{post}}^{(j)} = ([n_{j} \bm{\Sigma}_{\text{prior}}(\theta)]^{-1} +  (1/\sigma_{dp}^{2}) \bm{I}_{D})^{-1}, \quad \text{and} \quad \bm{\mu}_{\text{post}}^{(j)} = \bm{\Sigma}_{\text{post}}^{(j)} (\bm{\Sigma}_{\text{prior}}(\bm{\theta})^{-1} \bm{\mu}_{\text{prior}}(\bm{\theta}) + \widehat{\bm{ss}}_{j} / \sigma_{dp}^{2}).
\]
\item Sample $\bm{\Sigma}_{x} \sim \mathcal{IW}\left( \sum_{j = 1}^{J} \bm{S}_{j} + \bm{\Lambda}, n + \kappa \right)$.
\item Sample $(\bm{\theta}, \sigma^{2}) \sim \mathcal{NIG}(a_{n}, b_{n}, \bm{\mu}_{n}, \bm{\Lambda}_{n})$ by sampling $\sigma^{2} \sim \mathcal{IG}(a_{n}, b_{n})$, followed by sampling $\bm{\theta} \sim \mathcal{N}(\bm{\mu}_{n}, \sigma_{y}^{2} \bm{\Lambda}_{n}^{-1})$ with
\[
\bm{\Lambda}_{n} = \sum_{j = 1}^{J} \bm{S}_{j} + \bm{\Lambda}_{0}, \quad \bm{\mu}_{n} = \bm{\Lambda}_{n}\left( \sum_{j = 1}^{J} \bm{z}_{j} + \bm{\Lambda}_{0} \mu_{0}\right), \quad a_{n} = a_{0} + n/2, \quad b_{n} = 0.5 u + \bm{\mu}_{0}^{T} \bm{\Lambda}_{0} \bm{\mu}_{0} - \bm{\mu}_{n} ^{T} \bm{\Lambda}_{n} \bm{\mu}_{n}.
\]
\end{itemize}

\subsection{A Variant of \texttt{adaSSP} for the Distributed Setting} \label{sec: extended_adaSSP}
The \texttt{adaSSP} algorithm of \citep{Wang_2018_RevisitingDP} is originally designed for a single data holder. In \texttt{adaSSP}, a differentially private estimate of $\bm{\theta}$ is released as
\begin{equation} \label{eq: Wang_theta_est}
    \hat{\bm{\theta}}=(\hat{\bm{S}} + \lambda \bm{I}_{d})^{-1}\hat{\bm{z}}.
\end{equation}

Here $\hat{\bm{S}}$ and $\hat{\bm{z}}$ are the privatised versions of $\bm{S}$ and $\bm{z}$ as in \eqref{eq: cond dist of hatS} and \eqref{eq: cond dist of hatZ}, except that $\epsilon$ and $\delta$ must be changed to $2 \epsilon/3$ and $2 \delta / 3$ in those equations to provide $\epsilon, \delta$-DP. This is because \texttt{adaSSP} uses another parameter, $\lambda$, which is also calculated from the sensitive data and a third of the privacy budget is spent for privatising that calculation. With $v \sim \mathcal{N}(0,1)$, $\lambda$ is specifically calculated as
\[
\lambda = \max\left\{0,\sigma\sqrt{d \ln (6/\delta) \ln(2d^2/\rho)}- \Tilde{\lambda}_{\text{min}}\right\}
\]
with $\sigma = \| X\|^2/(\epsilon/3)$, $\lambda_{\text{min}}=\min(\text{eig}(\bm{S}))$, and $\Tilde{\lambda}_{\text{min}} = \max\left\{\lambda_{\text{min}}+ \sqrt{\ln(6/\delta)}\sigma v-\ln (6/\delta)\sigma v, 0\right\}$.
We consider an extension of  \citep{Wang_2018_RevisitingDP} for $J \geq 1$.  To perform the extension, we reflect on its tendency to approximate a (regularised) least square solution and consider the following estimate
\begin{equation} \label{eq: adaSSP estimate}
\hat{\bm{\theta}}=\left( \sum_{j = 1}^{J} \hat{\bm{S}}_j + \bm{I}_{d} \sum_{j = 1}^{J} \lambda_j  \right)^{-1} \left(\sum_{j = 1}^{J}\hat{\bm{z}}_j \right).
\end{equation}
Here $\hat{\bm{S}}_{j}$, $\hat{\bm{z}}_{j}$ and $\lambda_{j}$ are calculated in data node $j$ separately from the other nodes. The estimation procedure in \eqref{eq: adaSSP estimate} does not properly account for the Bayesian paradigm but aggregates the shared $\hat{\bm{S}}_{j}$'s and $\hat{\bm{z}}_{j}$'s to approximate the (regulated) least squares solution. Note that each node has separate $\lambda_j$ because it depends on $\delta_j$, and so the number of rows for each node.

\section{Additional results} \label{sec: Additional results}
Figure \ref{fig: MMD plots} shows the MMD estimates for $d = 2$. The (squared) MMD between two distributions can be estimated unbiasedly using i.i.d. samples from those distributions. Non-private posterior and private posteriors of \texttt{Bayes-fixedS-fast} are in closed form and can be sampled easily. For the MCMC models, we use every 50th sample of the chain to avoid autocorrelation and thus obtain nearly independent samples.
\begin{figure}[!ht]
    \centerline{
    \includegraphics[scale=0.3]{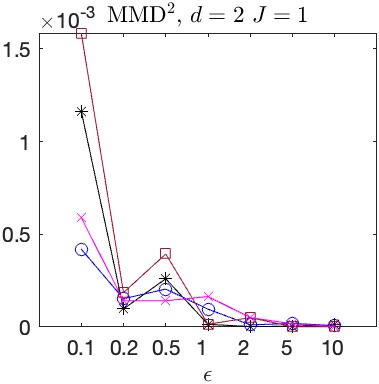} \includegraphics[scale=0.3]{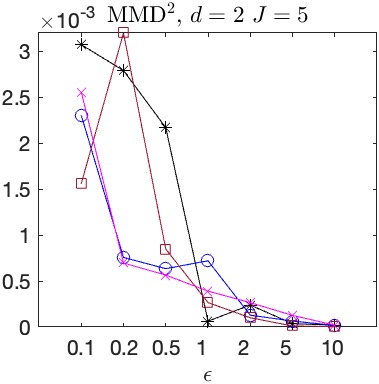}
    \includegraphics[scale=0.3]{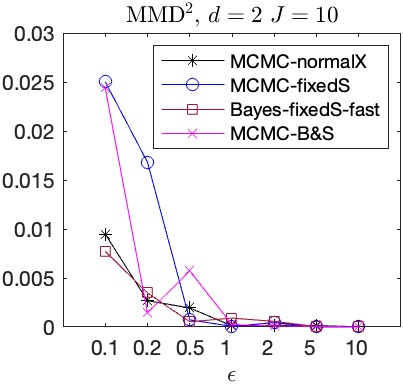}
    }
    \caption{MMD$^{2}$ estimates for each $J$ and $d=2$.}
    \label{fig: MMD plots}
\end{figure}

Table \ref{table: prediction CIs} shows the confidence intervals for the prediction MSE for the real-data experiments. 

\begin{table}[H]
\small
\caption{90\% confidence interval for mean prediction MSE (over 50 runs) for the real-data experiments - $\epsilon=1$} \label{table: prediction CIs}
\centerline{
\begin{tabular}{c|c|c|c|c|c|c}
    \toprule
   $J$ & \textbf{data sets} & \texttt{MCMC-normalX} & \texttt{MCMC-fixedS} & \texttt{Bayes-fixedS-fast}  & \texttt{MCMC-B\&S} & \texttt{adaSSP}\\
    \midrule
    \multirow{5}{*}{$J=1$} 
     & PowerPlant  & [0.0128, 0.0129] & [0.0128, 0.0129]	 & [0.0128, 0.0129] & [0.0128, 0.0129] & [0.0137, 0.0140]  \\
     & BikeSharing  & [0.0021, 0.0027] & [0.0018, 0.0024] & [0.0018, 0.0024]  & [0.0017, 0.0022] & [0.0106, 0.0108] \\
     & AirQuality  & [0.0051, 0.0069] &  [0.0048, 0.0066]  & [0.0048, 0.0066] & [0.0053, 0.0071]	& [0.0065, 0.0067]\\
     & 3droad & 	[0.0229, 0.0229] & [0.0229, 0.0229] & [0.0229, 0.0229] & [0.0229, 0.0229] & [0.0229, 0.0229]\\
    \midrule
    \multirow{5}{*}{$J=5$} 
     & PowerPlant  &[0.0132, 0.0135] & [0.0132, 0.0136] & [0.0132, 0.0136] & [0.0135, 0.0138] & [0.0234, 0.0236]\\
     & BikeSharing  &[0.0137, 0.0210] & [0.0041, 0.0049] & [0.0040, 0.0049]&[0.0076, 0.0095]& [0.0380, 0.0383]\\
     & AirQuality  &[0.0109, 0.0175]&[0.0089, 0.010]&[0.0089, 0.0109]&[0.0109, 0.0151]& [0.0226, 0.0229]\\
     & 3droad &[0.0229, 0.0229]& [0.0229, 0.0229]&[0.0229, 0.0229]	&[0.0229, 0.0229] & [0.0229, 0.0229]\\
     \midrule
    \multirow{5}{*}{$J=10$} 
     & PowerPlant  &[0.0139, 0.0145] &[0.0140, 0.0146]&[0.0140, 0.0146]&[0.0141, 0.0146]& [0.0349, 0.0353]\\
     & BikeSharing  &[0.0671, 0.0954]&[0.0072, 0.0092]&[0.0072, 0.0092]&[0.0116, 0.0158]& [0.0524, 0.0527]\\
     & AirQuality  & [0.0733, 0.1236]&[0.0099, 0.0135]& [0.0099, 0.0135]&[0.0175, 0.0257]& [0.0313, 0.0315]\\
     & 3droad  &[0.0229, 0.0229]&[0.0229, 0.0229]&[0.0229, 0.0229] &[0.0229, 0.0229]&[0.0229, 0.0229]
\\
     \bottomrule
\end{tabular}
}
\end{table}

\end{document}